\def\Rset{\mathbb{R}}
\def\S{\mathbb{S}}
\DeclareMathOperator*{\E}{\rm E}
\DeclareMathOperator{\Pdim}{Pdim}
\DeclareMathOperator{\card}{Card}
\newcommand{\ignore}[1]{}
\newcommand{\mat}[1]{{\mathbf #1}}
\renewcommand{\u}{\mat{u}}
\renewcommand{\v}{\mat{v}}
\newcommand{\e}{\epsilon}
\newcommand{\h}{\widehat}
\newcommand{\ssigma}{{\boldsymbol \sigma}}
\renewcommand{\L}{{\cal L}}
\newcommand{\cX}{\mathcal{X}}
\newcommand{\cY}{\mathcal{Y}}
\newcommand{\cZ}{\mathcal{Z}}
\newcommand{\set}[1]{\{#1\}}
\newenvironment{proof*}{\trivlist
\item[\hskip\labelsep{\it\proofname}{.}]}
\begin{document}

\title{Relative Deviation Learning Bounds and \\Generalization with Unbounded Loss Functions}

\author{\name Corinna Cortes \email corinna@google.com \\
       \addr Google Research, 76 Ninth Avenue, New York, NY 10011
        \AND
       \name Spencer Greenberg \email spencerg@cims.nyu.edu \\
       \addr Courant Institute, 251 Mercer Street, New York, NY 10012
        \AND
       \name Mehryar Mohri \email mohri@cims.nyu.edu \\
       \addr Courant Institute and Google Research\\251 Mercer Street, New York, NY 10012
       }
       
\editor{TBD}

\maketitle

\begin{abstract}
  We present an extensive analysis of relative deviation bounds,
  including detailed proofs of two-sided inequalities and their
  implications. We also give detailed proofs of two-sided
  generalization bounds that hold in the general case of unbounded
  loss functions, under the assumption that a moment of the loss is
  bounded. These bounds are useful in the analysis of importance
  weighting and other learning tasks such as unbounded regression.
\end{abstract}

\begin{keywords}
 Generalization bounds, learning theory, unbounded loss functions.
\end{keywords}

\section{Introduction}

Most generalization bounds in learning theory hold only for bounded
loss functions. This includes standard VC-dimension bounds
\citep{vapnik98}, Rademacher complexity
\citep{KoltchinskiiPanchenko2000,BartlettBoucheronLugosi2002,KoltchinskiiPanchenko2002,BartlettMendelson2002}
or local Rademacher complexity bounds
\citep{Koltchinskii2006,BartlettBousquetMendelson2002}, as well as most other bounds
based on other complexity terms. This assumption is typically
unrelated to the statistical nature of the problem considered but it
is convenient since when the loss functions are uniformly bounded,
standard tools such as Hoeffding's inequality
\citep{Hoeffding63,Azuma67}, McDiarmid's inequality
\citep{McDiarmid89}, or Talagrand's concentration inequality
\citep{talagrand_ineq} apply.

There are however natural learning problems where the boundedness
assumption does not hold. This includes unbounded regression tasks
where the target labels are not uniformly bounded, and a variety of
applications such as sample bias correction
\citep{dudik,huang-nips06,bias,sugiyama-nips2008,bickel-icml07},
domain adaptation \citep{bendavid,blitzer,daume06,jiang-zhai07,nadap,nsmooth},
or the analysis of boosting \citep{DasguptaL03}, where the importance
weighting technique is used \citep{importance}. It is therefore
critical to derive learning guarantees that hold for these scenarios
and the general case of unbounded loss functions.

When the class of functions is unbounded, a single function may take
arbitrarily large values with arbitrarily small probabilities.  This
is probably the main challenge in deriving uniform convergence bounds
for unbounded losses.  This problem can be avoided by assuming the
existence of an envelope, that is a single non-negative function with
a finite expectation lying above the absolute value of the loss of
every function in the hypothesis set
\citep{Dudley84,Pollard84,Dudley87,Pollard89,Haussler92}, an
alternative assumption similar to Hoeffding's inequality based on the
expectation of a hyperbolic function, a quantity similar to the
moment-generating function, is used by \cite{MeirZhang2003}. However,
in many problems, e.g., in the analysis of importance weighting even
for common distributions, there exists no suitable envelope function
\citep{importance}. Instead, the second or some other
$\alpha$th-moment of the loss seems to play a critical role in the
analysis. Thus, instead, we will consider here the assumption that
some $\alpha$th-moment of the loss functions is bounded as in
\cite{vapnik98,vapnik06}.

This paper presents in detail two-sided generalization bounds for
unbounded loss functions under the assumption that some
$\alpha$th-moment of the loss functions, $\alpha > 1$, is
bounded. The proof of these bounds makes use of relative deviation
generalization bounds in binary classification, which we also prove
and discuss in detail. Much of the results and material we present is
not novel and the paper has therefore a survey nature.  However, our
presentation is motivated by the fact that the proofs given in the
past for these generalization bounds were either incorrect or
incomplete. 

We now discuss in more detail prior results and proofs.  One-side
relative deviation bounds were first given by \cite{vapnik98}, later
improved by a constant factor by \cite{AnthonyShawe-Taylor1993}.
These publications and several others have all relied on a lower bound
on the probability that a binomial random variable of $m$ trials
exceeds its expected value when the bias verifies $p >
\frac{1}{m}$. This also later appears in \cite{Vapnik2006} and
implicitly in other publications referring to the relative deviations
bounds of \cite{vapnik98}.  To the best of our knowledge, no actual
proof of this inequality was ever given in the past in the machine
learning literature before our recent work \citep{greenbergmohri2013}.
One attempt was made to prove this lemma in the context of the
analysis of some generalization bounds \citep{Jaeger2005}, but
unfortunately that proof is not sufficient to support the general case
needed for the proof of the relative deviation bound of
\cite{vapnik98}. 

We present the proof of two-sided relative deviation bounds in detail
using the recent results of \cite{greenbergmohri2013}. The two-sided
versions we present, as well as several consequences of these bounds,
appear in \cite{AnthonyBartlett99}. However, we could not find a full
proof of the two-sided bounds in any prior publication. Our
presentation shows that the proof of the other side of the inequality
is not symmetric and cannot be immediately obtained from that of the
first side inequality.  Additionally, this requires another proof
related to the binomial distributions given by
\cite{greenbergmohri2013}.

Relative deviation bounds are very informative guarantees in machine
learning of independent interest, regardless of the key role they play
in the proof of unbounded loss learning bounds.  They lead to sharper
generalization bounds whose right-hand side is expressed as the
interpolation of a $O(1/m)$ term and a $O(1/\sqrt{m})$ term that
admits as a multiplier the empirical error or the generalization
error. In particular, when the empirical error is zero, this leads to
faster rate bounds. We present in detail the proof of this type of
results as well as that of several others of interest
\citep{AnthonyBartlett99}. Let us mention that, in the form presented
by \cite{vapnik98}, relative deviation bounds suffer from a
discontinuity at zero (zero denominator), a problem that also affects
inequalities for the other side and which seems not to have been
rigorously treated by previous work. Our proofs and results explicitly
deal with this issue.

We use relative deviations bounds to give the full proofs of two-sided
generalization bounds for unbounded losses with finite moments of
order $\alpha$, both in the case $1 < \alpha \leq 2$ and the case
$\alpha > 2$. One-sided generalization bounds for unbounded loss
functions were first given by \citet{vapnik98,vapnik06} under the same
assumptions and also using relative deviations.  The one-sided version
of our bounds for the case $1 < \alpha \leq 2$ coincides with that of
\citep{vapnik98,vapnik06} modulo a constant factor, but the proofs
given by Vapnik in both books seem to be incorrect.\footnote{In
  \citep{vapnik98}[p.204-206], statement (5.37) cannot be derived from
  assumption (5.35), contrary to what is claimed by the author, and in
  general it does not hold: the first integral in (5.37) is restricted
  to a sub-domain and is thus smaller than the integral of
  (5.35). Furthermore, the main statement claimed in Section (5.6.2)
  is not valid. In \citep{vapnik06}[p.200-202], the author invokes the
  \emph{Lagrange method} to show the main inequality, but the proof
  steps are not mathematically justified. Even with our best efforts,
  we could not justify some of the steps and strongly believe the
  proof not to be correct. In particular, the way function $z$ is
  concluded to be equal to one over the first interval is suspicious
  and not rigorously justified.} The core component of our proof is
based on a different technique using H\"older's inequality. We also
present some more explicit bounds for the case $1 < \alpha \leq 2$ by
approximating a complex term appearing in these bounds. The one-sided
version of the bounds for the case $\alpha > 2$ are also due to
\citet{vapnik98,vapnik06} with similar questions about the
proofs.\footnote{Several of the comments we made for the case $1 <
  \alpha \leq 2$ hold here as well. In particular, the author's proof
  is not based on clear mathematical justifications.  Some steps seem
  suspicious and are not convincing, even with our best efforts to
  justify them.} In that case as well, we give detailed proofs using
the Cauchy-Schwarz inequality in the most general case where a
positive constant is used in the denominator to avoid the
discontinuity at zero. These learning bounds can be used directly
in the analysis of unbounded loss functions as in the case
of importance weighting \citep{importance}.

The remainder of this paper is organized as follows. In
Section~\ref{sec:preliminaries}, we briefly introduce some definitions
and notation used in the next sections. Section~\ref{sec:relative}
presents in detail relative deviation bounds as well as several of
their consequences. Next, in Section~\ref{sec:unbounded} we present
generalization bounds for unbounded loss functions under the
assumption that the moment of order $\alpha$ is bounded first in the
case $1 < \alpha \leq 2$ (Section~\ref{sec:unbounded1}), then in the
case $\alpha > 2$ (Section~\ref{sec:unbounded2}).

\section{Preliminaries}
\label{sec:preliminaries}

We consider an input space $\cX$ and an output space $\cY$, which in
the particular case of binary classification is $\cY = \set{-1, +1}$
or $\cY = \set{0, 1}$, or a measurable subset of $\Rset$ in
regression. We denote by $D$ a distribution over $\cZ = \cX \times
\cY$. For a sample $S$ of size $m$ drawn from $D^m$, we will denote by
$\h D$ the corresponding empirical distribution, that is the
distribution corresponding to drawing a point from $S$ uniformly at
random. Throughout this paper, $H$ denotes a hypothesis of functions
mapping from $\cX$ to $\cY$. The loss incurred by hypothesis $h \in H$
at $z \in \cZ$ is denoted by $L(h, z)$. $L$ is assumed to be
non-negative, but not necessarily bounded. We denote by $\L(h)$ the
expected loss or generalization error of a hypothesis $h \in H$ and by
$\h \L_S(h)$ its empirical loss for a sample $S$:
\begin{equation}
\L(h) = \E_{z \sim D}[L(h, z)] 
\qquad \qquad \h \L_S(h) = \E_{z \sim \h D}[L(h, z)].
\end{equation}
For any $\alpha > 0$, we also use the notation $\L_\alpha(h) = \E_{z
  \sim D}[L^\alpha(h, z)]$ and $\h \L_\alpha(h) = \E_{z \sim \h
  D}[L^\alpha(h, z)]$ for the $\alpha$th moments of the loss. When the
loss $L$ coincides with the standard zero-one loss used in binary
classification, we equivalently use the following notation
\begin{equation}
R(h) = \E_{z = (x, y) \sim D}[1_{h(x) \neq y}] 
\qquad \qquad \h R_S(h) = \E_{z = (x, y) \sim \h D}[1_{h(x) \neq y}] .
\end{equation}
We will sometimes use the shorthand $x_1^m$ to denote a sample of $m >
0$ points $(x_1, \ldots, x_m) \in \cX^m$.  For any hypothesis set $H$
of functions mapping $\cX$ to $\cY = \set{-1, +1}$ or $\cY = \set{0,
  1}$ and sample $x_1^m$, we denote by $\S_H(x_1^m)$ the number of
distinct dichotomies generated by $H$ over that sample and by
$\Pi_m(H)$ the growth function:
\begin{align}
& \S_H(x_1^m) = \card \Big(\big\{(h(x_1), \ldots, h(x_{m})) \colon h
  \in H \big\} \Big)\\
& \Pi_m(H) = \max_{x_1^m \in \cX^m} \S_H(x_1^m).
\end{align}

\section{Relative deviation bounds}
\label{sec:relative}

In this section we prove a series of relative deviation learning
bounds which we use in the next section for deriving generalization
bounds for unbounded loss functions. We will assume throughout the
paper, as is common in much of learning theory, that each expression
of the form $\sup_{h \in H} [. . .]$ is a measurable function, which
is not guaranteed when $H$ is not a countable set. This assumption
holds nevertheless in most common applications of machine learning.

We start with the proof of a symmetrization lemma
(Lemma~\ref{lemma:lemma1}) originally presented by \cite{vapnik98},
which is used by \cite{AnthonyShawe-Taylor1993}.  These publications
and several others have all relied on a lower bound on the probability
that a binomial random variable of $m$ trials exceeds its expected
value when the bias verifies $p > \frac{1}{m}$.  To our knowledge, no
rigorous proof of this fact was ever provided in the literature in the
full generality needed. The proof of this result was recently given
by \cite{greenbergmohri2013}.

\begin{lemma}[\cite{greenbergmohri2013}]
\label{lemma:binomial}
Let $X$ be a random variable distributed according to the binomial
distribution $B(m, p)$ with $m$ a positive integer (the number of trials) and $p >
\frac{1}{m}$ (the probability of success of each trial).  Then, the following inequality holds:
\begin{equation}
\label{eq:main}
\Pr\Big[X \geq  \E[X]\Big] > \frac{1}{4}, 
\end{equation}
where $\E[X] = mp$. 
\end{lemma}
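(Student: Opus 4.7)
The plan is to exploit the monotonicity of $p \mapsto \Pr[B(m,p) \geq k]$ (for fixed $m$ and $k$) to reduce the continuous lemma to a finite family of canonical extremal instances, and then to handle those instances by direct analysis. Setting $k = \lceil mp \rceil$, the assumption $mp > 1$ gives $k \geq 2$; for each such $k$ the admissible $p$ lies in the half-open interval $((k-1)/m,\, k/m]$, on which $\Pr[B(m,p) \geq k]$ is strictly increasing and continuous. Its infimum is therefore the one-sided limit $\Pr[B(m,(k-1)/m) \geq k]$ and is not attained for any admissible $p$, so it is enough to establish the non-strict bound
\begin{equation*}
\Pr\bigl[B(m,(k-1)/m) \geq k\bigr] \geq \tfrac{1}{4}
\end{equation*}
for every integer $2 \leq k \leq m$; the strict inequality of the lemma then follows for free from $p > (k-1)/m$.

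I would then split by $k$. For $k = m$ the probability collapses to $(1-1/m)^m$, which is nondecreasing in $m$ with value $1/4$ at $m=2$ and strictly larger for $m \geq 3$. For $k = 2$ a short computation yields
\begin{equation*}
\Pr\bigl[B(m,1/m) \geq 2\bigr] = 1 - (2 - 1/m)(1 - 1/m)^{m-1},
\end{equation*}
so the bound reduces to proving $(2-1/m)(1-1/m)^{m-1} \leq 3/4$. I would verify this by evaluating at $m=2$ (equality) and then showing the continuous extension is strictly decreasing, by computing the logarithmic derivative and reducing positivity of the derivative of that log-derivative to the elementary inequality $(2x-1)^2 > 4x(x-1)$. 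For the intermediate range $3 \leq k \leq m-1$, the natural starting point is the classical fact that the median of $B(m,(k-1)/m)$ equals its integer mean $k-1$, so that $\Pr[X \geq k-1] \geq 1/2$, and I would then try to peel off the mode mass via $\Pr[X \geq k] = \Pr[X \geq k-1] - \Pr[X = k-1]$.

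The main obstacle is that the naive median-minus-mode bound fails in the intermediate regime: for example $\Pr[B(4,1/2) = 2] = 3/8 > 1/4$, so one cannot simply invoke $\Pr[X = k-1] \leq 1/4$. To overcome this I would exploit the explicit ratio $\Pr[X = j+1]/\Pr[X = j] = (m-j)(k-1)/((j+1)(m-k+1))$ of consecutive binomial probabilities to express $\Pr[X \geq k]$ as $\Pr[X = k-1]$ times an explicit term-by-term-controllable series, and then lower bound that series directly rather than going through the median. An alternative that I would consider in parallel is a coupling between $B(m,(k-1)/m)$ and the neighboring integer-mean binomial $B(m,k/m)$, together with a convexity argument on the regularized incomplete beta representation $\Pr[B(m,p) \geq k] = I_p(k, m-k+1)$, to transfer the clean median-equals-mean bound from $p = k/m$ down to $p = (k-1)/m$. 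Either way, the delicate point is quantifying the drop in tail probability when $p$ decreases by $1/m$ between two consecutive integer-mean values, while tracking the strictness carefully enough that only the single configuration $(m,k) = (2,2)$ sits on the boundary $1/4$ in the limit.
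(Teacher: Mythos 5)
There is a genuine gap. Your reduction is sound: replacing the event $X \geq mp$ by $X \geq k$ with $k = \lceil mp\rceil \in \{2,\dots,m\}$, invoking the strict monotonicity and continuity of $p \mapsto \Pr[B(m,p)\geq k]$ on $((k-1)/m,\,k/m]$, and thereby reducing the strict bound to the non-strict bound $\Pr[B(m,(k-1)/m)\geq k]\geq \tfrac14$ at the integer-mean endpoints is correct, and your treatments of $k=m$ (monotonicity of $(1-1/m)^m$) and $k=2$ (monotonicity of $(2-1/m)(1-1/m)^{m-1}$ via the sign of the second logarithmic derivative) both check out. But the case $3 \leq k \leq m-1$ is precisely the mathematical content of the lemma, and you do not prove it: you correctly observe that the median-minus-mode argument fails (e.g.\ $\Pr[B(4,1/2)=2]=3/8$), and then offer two candidate strategies --- a term-by-term lower bound on the tail series built from the ratio $\Pr[X=j+1]/\Pr[X=j]$, and a coupling/convexity argument on $I_p(k,m-k+1)$ --- without executing either or giving any evidence that they close. ``Quantifying the drop in tail probability when $p$ decreases by $1/m$,'' which you flag as the delicate point, is exactly where the difficulty lives, so the proposal as written establishes the lemma only in the two extreme cases.

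For context: the paper itself does not prove this lemma; it is imported from \cite{greenbergmohri2013}, and the surrounding discussion stresses that no proof existed in the machine-learning literature before that work precisely because the uniform $1/4$ bound across all $(m,k)$ is not elementary. The published proof does not proceed by the local arguments you sketch for the middle range; it combines explicit quantitative approximations to the binomial tail (of Camp--Paulson type, as the paper's Figure~\ref{fig:campPaulson} alludes to) for large parameters with direct verification of the remaining finitely many cases. Your outline would become a proof only if one of your two proposed strategies for $3\leq k\leq m-1$ were carried through with uniform constants, and there is no indication in the proposal of how the resulting series or convexity estimates would be bounded below by $1/4$ uniformly in $m$ and $k$.
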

The lower bound is never reached but is approached asymptotically when
$m = 2$ as $p \to \frac{1}{2}$ from the right.

Our proof of Lemma~\ref{lemma:lemma1} is more concise than that of
\cite{vapnik98}. Furthermore, our statement and proof handle the
technical problem of discontinuity at zero ignored by previous
authors. The denominator may in general become zero, which would lead
to an undefined result. We resolve this issue by including an
arbitrary positive constant $\tau$ in the denominator in most of our
expressions.

For the proof of the following result, we will use the function $F$
defined over $(0, +\infty) \times (0, +\infty)$ by $F\colon (x, y)
\mapsto \frac{x - y}{\sqrt[\alpha]{\frac{1}{2}[x + y +
    \frac{1}{m}]}}$.  By Lemma~\ref{lemma:f}, $F(x,y)$ is increasing
in $x$ and decreasing in $y$.

\begin{lemma}
\label{lemma:lemma1}
Let $1 < \alpha \leq 2$. Assume that $m \e^{\frac{\alpha}{\alpha - 1}}
> 1$. Then, for any hypothesis set $H$ and any $\tau > 0$, the
following holds:
\begin{equation*}
\Pr_{S \sim D^m} \bigg[\sup_{h \in H} \frac{R(h) - \h R_S(h)}{\sqrt[\alpha]{R(h)+\tau}} > 
\e \bigg]
\leq 4 \Pr_{S, S' \sim D^m} \bigg[\sup_{h \in H} \frac{ \h R_{S'}(h) - \h R_{S}(h) }{
  \sqrt[\alpha]{\frac{1}{2} [\h R_S(h) + \h R_{S'}(h) + \frac{1}{m}] } } > \e \bigg].
\end{equation*}
\end{lemma}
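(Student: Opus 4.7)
My plan is to follow the classical Vapnik ghost-sample symmetrization, but with extra care for the $\alpha$th-root denominator and the $\tau > 0$ regularization that distinguishes this lemma from the bounded-loss version. Concretely, I would let $A$ be the event on $S$ appearing inside the LHS probability, pick (measurably, via the paper's blanket assumption) a witness $h_0 = h_0(S) \in H$ satisfying
\[
R(h_0) - \h R_S(h_0) > \e \bigl(R(h_0) + \tau\bigr)^{1/\alpha}
\]
on $A$, and aim to show that for every $S \in A$ the ghost sample $S'$ makes $h_0$ witness the RHS event with probability strictly greater than $1/4$. Integrating this conditional bound over $S \in A$ then gives the factor of $4$.

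The first substantive step is to argue that on $A$ the witness must satisfy $R(h_0) > 1/m$. I would prove this by contradiction: if $R(h_0) \leq 1/m$, then dropping $\h R_S(h_0) \geq 0$ and $\tau \geq 0$ bounds the LHS ratio by
\[
\frac{R(h_0)}{R(h_0)^{1/\alpha}} \;=\; R(h_0)^{(\alpha-1)/\alpha} \;\leq\; m^{-(\alpha-1)/\alpha},
\]
and the hypothesis $m \e^{\alpha/(\alpha-1)} > 1$, rewritten as $\e > m^{-(\alpha-1)/\alpha}$, makes this strictly less than $\e$, contradicting $S \in A$. With $R(h_0) > 1/m$ in hand, the binomial variable $m \h R_{S'}(h_0) \sim B(m, R(h_0))$ is eligible for Lemma~\ref{lemma:binomial}, which gives $\Pr_{S'}\bigl[\h R_{S'}(h_0) \geq R(h_0)\bigr] > 1/4$.

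The second step converts this binomial event into the RHS event for the witness $h_0$. By the monotonicity of $F$ in its first argument from Lemma~\ref{lemma:f}, $\h R_{S'}(h_0) \geq R(h_0)$ implies $F(\h R_{S'}(h_0), \h R_S(h_0)) \geq F(R(h_0), \h R_S(h_0))$, so it suffices to show $F(R(h_0), \h R_S(h_0)) > \e$. Unfolding $F$ and plugging in the defining inequality on $A$, this reduces to the purely arithmetic claim $R(h_0) - \h R_S(h_0) + 2\tau \geq 1/m$, which I would verify via the chain
\[
R(h_0) - \h R_S(h_0) \;>\; \e (R(h_0) + \tau)^{1/\alpha} \;>\; \e\, R(h_0)^{1/\alpha} \;>\; \e\, m^{-1/\alpha} \;>\; m^{-(\alpha-1)/\alpha} \cdot m^{-1/\alpha} \;=\; \tfrac{1}{m},
\]
whose successive inequalities use $\tau > 0$, then $R(h_0) > 1/m$, then once more $m\e^{\alpha/(\alpha-1)} > 1$.

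The main obstacle will be precisely this second step: the two denominators $(R + \tau)^{1/\alpha}$ and $(\tfrac{1}{2}(R + \h R_S + 1/m))^{1/\alpha}$ do not line up termwise, and reconciling them for an \emph{arbitrary} $\tau > 0$ (including $\tau \ll 1/m$) forces one to combine the lower bound $R(h_0) > 1/m$ with the quantitative growth hypothesis $m\e^{\alpha/(\alpha-1)} > 1$. Once this comparison is in place, integrating the conditional $1/4$ lower bound over $S \in A$ yields $\Pr_{S, S'}[\text{RHS event}] \geq (1/4)\,\Pr_S[A]$, which is the claim.
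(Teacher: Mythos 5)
Your proposal is correct and follows the same overall strategy as the paper: produce a witness hypothesis on the event, invoke Lemma~\ref{lemma:binomial} on the ghost sample (after verifying $R(h_0) > 1/m$), use the monotonicity of $F$ from Lemma~\ref{lemma:f}, and integrate the conditional $1/4$ bound. The execution of the central step differs in a way worth noting. The paper proves the implication by substituting the extremal values into $F$, bounding $F(\h R_{S'}(h), \h R_S(h)) \geq F(R(h), R(h) - \e R(h)^{1/\alpha})$ and then simplifying the resulting expression using $\e^{\alpha/(\alpha-1)} < R(h)$ and $m\e^{\alpha/(\alpha-1)} > 1$. You instead compare the two denominators directly, reducing everything to the single arithmetic inequality $\frac{1}{2}[R(h_0) + \h R_S(h_0) + \frac{1}{m}] \leq R(h_0) + \tau$, i.e.\ $R(h_0) - \h R_S(h_0) + 2\tau \geq \frac{1}{m}$, which your chain $R(h_0) - \h R_S(h_0) > \e (R(h_0)+\tau)^{1/\alpha} > \e\, m^{-1/\alpha} > m^{-1}$ establishes correctly; this is more elementary and transparent. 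Your handling of the witness is also cleaner: by taking an exact witness per $S$ (which exists since the event is a strict inequality on a supremum), conditioning on $S$, and integrating, you avoid the paper's $\eta$-approximate supremum, the limit $\eta \to 0$, and the case analysis on whether $R(h_0) \leq \e^{\alpha/(\alpha-1)}$. A further small point in your favor: you work with the non-strict event $\h R_{S'}(h_0) \geq R(h_0)$, which is exactly what Lemma~\ref{lemma:binomial} bounds, whereas the paper's implication is phrased with the strict event $\h R_{S'}(h_0) > R(h_0)$, for which the lemma does not directly give a lower bound. The only caveat is the measurable-selection remark: strictly speaking you do not need $h_0(S)$ to be measurable, only that $\Pr_{S'}[\text{RHS event} \mid S]$ is measurable in $S$ (covered by the paper's blanket assumption) and bounded below by $1/4$ pointwise on $A$, which your argument supplies.
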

\begin{proof}
  We give a concise version of the proof given by \citep{vapnik98}. We
  first show that the following implication holds for any $h \in H$: 
\begin{equation}
\label{eq:implication}
\left( \frac{R(h) - \h
    R_S(h)}{\sqrt[\alpha]{R(h) + \tau}} > \e \right) \wedge \left( \h R_{S'}(h) >
  R(h) \right) \Rightarrow F(\h R_{S'}(h), \h R_{S}(h)) > \e.
\end{equation}
The first condition can be equivalently rewritten as $\h R_S(h) < R(h)
- \e (R(h) + \tau)^{\frac{1}{\alpha}}$, which implies
\begin{equation}   
\label{eq:h0}
\h R_S(h) < R(h) - \e R(h)^{\frac{1}{\alpha}}\\
\qquad \text{and} \qquad \e^{\frac{\alpha}{\alpha - 1}} < R(h),
\end{equation}
since $\h R_S(h) \ge 0$. Assume that the antecedent of the implication
\eqref{eq:implication} holds for $h \in H$. Then, in view of the
monotonicity properties of function $F$ (Lemma~\ref{lemma:f}), we can
write:
\begin{align*} 
F(\h R_{S'}(h), \h R_{S}(h))
& \geq F(R(h), R(h) - \e R(h)^{\frac{1}{\alpha}}) & \text{($\h R_{S'}(h) >
  R(h)$ and 1st ineq. of \eqref{eq:h0})}\\ 
& = \frac{R(h) - (R(h) - \e 
  R(h)^{\frac{1}{\alpha}})}{\sqrt[\alpha]{\frac{1}{2}[2 R(h) - \e 
  R(h)^{\frac{1}{\alpha}} + \frac{1}{m}]}} \\
& \geq  \frac{\e R(h)^{\frac{1}{\alpha}}}{\sqrt[\alpha]{\frac{1}{2}[2 R(h) -
  \e^{\frac{\alpha}{\alpha - 1}} + \frac{1}{m}]}}  & \text{(2nd ineq. of \eqref{eq:h0})}\\
& > \frac{\e R(h)^{\frac{1}{\alpha}}}{\sqrt[\alpha]{\frac{1}{2}[2 R(h)
  ]}} = \e,  & \text{($m \e^{\frac{\alpha}{\alpha - 1}} > 1$)}
\end{align*}
which proves \eqref{eq:implication}. Now, by definition of the
supremum, for any $\eta > 0$, there exists $h_0 \in H$ such
that
\begin{equation}
\label{eq:sup}
  \sup_{h \in H}  \frac{R(h) - \h R_S(h)}{\sqrt[\alpha]{R(h) + \tau}} -
  \frac{R(h_0) - \h R_S(h_0)}{\sqrt[\alpha]{R(h_0) + \tau}}   \le \eta.
\end{equation}
Using the definition of $h_0$ and implication \eqref{eq:implication}, we can write
\begin{align*}
& \Pr_{S, S' \sim D^m} \bigg[\sup_{h \in H} \frac{ \h R_{S'}(h) - \h R_{S}(h) }{
  \sqrt[\alpha]{\frac{1}{2} [\h R_S(h) + \h R_{S'}(h)  + \frac{1}{m}]}
} > \e \bigg] \\
& \geq \Pr_{S, S' \sim D^m} \bigg[\frac{ \h R_{S'}(h_0) - \h R_{S}(h_0) }{
  \sqrt[\alpha]{\frac{1}{2} [\h R_S(h_0) + \h R_{S'}(h_0)  + \frac{1}{m}]}
} > \e \bigg] & \text{(by def. of $\sup$)}\\
& \geq \Pr_{S, S' \sim D^m} \left[ \bigg ( \frac{R(h_0) - \h R_S(h_0)}{\sqrt[\alpha]{R(h_0)+\tau}} > 
\e \bigg) \wedge \Big( R_{S'}(h_0) > R(h_0) \Big) \right] &
(\text{implication \eqref{eq:implication}})\\
& = \Pr_{S \sim D^m} \left[ \frac{R(h_0) - \h R_S(h_0)}{\sqrt[\alpha]{R(h_0)+\tau}} > 
\e \right] \Pr_{S' \sim D^m} \left[ R_{S'}(h_0) > R(h_0) \right] & \text{(independence)}.
\end{align*}
We now show that this implies the following inequality
\begin{equation}
\label{eq:9}
\Pr_{S, S' \sim D^m}
\bigg[\sup_{h \in H} \frac{ \h R_{S'}(h) - \h R_{S}(h) }{
  \sqrt[\alpha]{\frac{1}{2} [\h R_S(h) + \h R_{S'}(h) + \frac{1}{m}]}
} > \e \bigg] 
\geq \frac{1}{4} \Pr_{S \sim D^m} \left[ \sup_{h \in H}
  \frac{R(h) - \h R_S(h)}{\sqrt[\alpha]{R(h)+\tau}} > \e + \eta
\right],
\end{equation}
by distinguishing two cases. If $R(h_0) > \e^{\frac{\alpha}{\alpha -
    1}}$, since $\e^{\frac{\alpha}{\alpha - 1}} > \frac{1}{m}$, by
Theorem~\ref{lemma:binomial} the inequality $\Pr_{S' \sim D^m} \left[
  R_{S'}(h_0) > R(h_0) \right] > \frac{1}{4}$ holds, which yields
immediately \eqref{eq:9}. Otherwise we have $R(h_0) \leq
\e^{\frac{\alpha}{\alpha - 1}}$. Then, by \eqref{eq:h0}, the condition
$\frac{R(h_0) - \h R_S(h_0)}{\sqrt[\alpha]{R(h_0)+\tau}} > \e$ cannot
hold for any sample $S \sim D^m$ which by \eqref{eq:sup} implies that
the condition $\sup_{h \in H} \frac{R(h) - \h
  R_S(h)}{\sqrt[\alpha]{R(h) + \tau}} > \e + \eta$ cannot hold for any
sample $S \sim D^m$, in which case \eqref{eq:9} trivially holds.  Now,
since \eqref{eq:9} holds for all $\eta > 0$, we can take the limit
$\eta \to 0$ and use the right-continuity of the cumulative
distribution to obtain
\begin{equation*}
\Pr_{S, S' \sim D^m} \bigg[\sup_{h \in H} \frac{ \h R_{S'}(h) - \h R_{S}(h) }{
  \sqrt[\alpha]{\frac{1}{2} [\h R_S(h) + \h R_{S'}(h)  + \frac{1}{m}]}
} > \e \bigg] 
\geq \frac{1}{4} \Pr_{S \sim D^m} \left[ \sup_{h \in H}  \frac{R(h)
    - \h R_S(h)}{\sqrt[\alpha]{R(h) + \tau}} > 
\e \right],
\end{equation*}
which completes the proof of Lemma~\ref{lemma:lemma1}.
\end{proof}
Note that the factor of 4 in the statement of lemma~\ref{lemma:lemma1}
can be modestly improved by changing the condition assumed from
$\e^{\frac{\alpha}{\alpha - 1}}>\frac{1}{m}$ to
$\e^{\frac{\alpha}{\alpha - 1}}>\frac{k}{m}$ for constant values of $k
> 1$.  This leads to a slightly better lower bound on $\Pr_{S' \sim
  D^m} \left[ R_{S'}(h_0) > R(h_0) \right]$, e.g.\ $3.375$ rather than
$4$ for $k = 2$, at the expense of not covering cases where the number
of samples $m$ is less than $\frac{k}{\e^{\frac{\alpha}{\alpha -
      1}}}$. For some values of $k$, e.g.\ $k = 2$, covering these
cases is not needed for the proof of our main theorem
(Theorem~\ref{th:relative}) though.  However, this does not seem to
simplify the critical task of proving a lower bound on $\Pr_{S' \sim
  D^m} \left[ R_{S'}(h_0) > R(h_0) \right]$, that is the probability
that a binomial random variable $B(m, p)$ exceeds its expected value
when $p > \frac{k}{m}$.  One might hope that restricting the range of
$p$ in this way would help simplify the proof of a lower bound on the
probability of a binomial exceeding its expected value. Unfortunately,
our analysis of this problem and proof \citep{greenbergmohri2013}
suggest that this is not the case since the regime where $p$ is small
seems to be the easiest one to analyze for this problem.

\begin{figure}
\centering
\includegraphics[scale=.41]{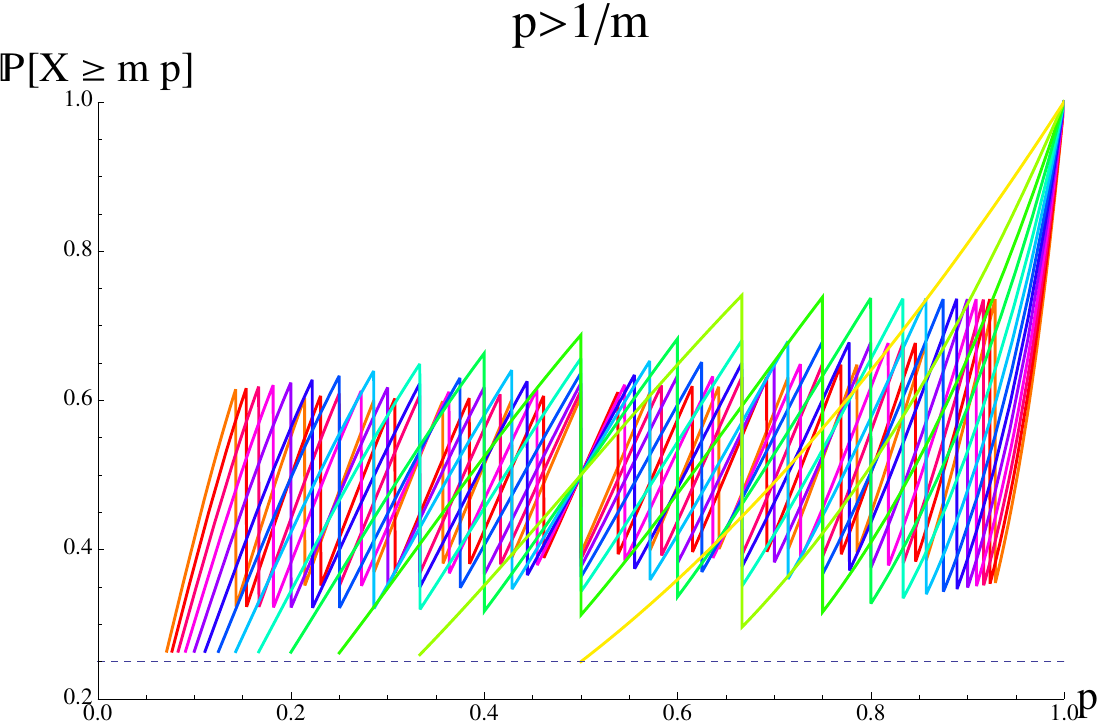}
\includegraphics[scale=.41]{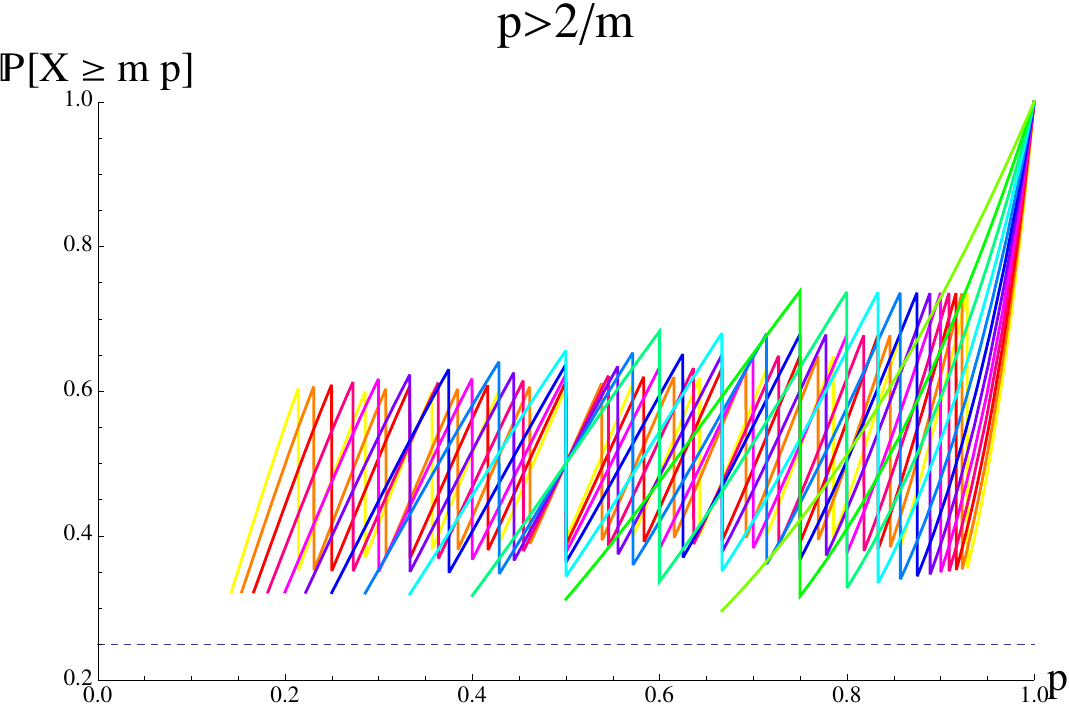}
\caption{These plots depict $\Pr[X \geq \E[X]]$, the probability that
  a binomially distributed random variable $X$ exceeds its
  expectation, as a function of the trial success probability
  $p$. The left plot shows only regions satisfying $p>\frac{1}{m}$ whereas
  the right plot shows only regions satisfying $p>\frac{2}{m}$.
  Each colored line corresponds to a different number of trials,
  $m = 2, 3, \ldots, 14$. The dashed horizontal line at
  $\frac{1}{4}$ represents the value of the lower bound used in the proof of lemma~\ref{lemma:lemma1}.}
\label{fig:campPaulson}
\end{figure}

The result of Lemma~\ref{lemma:lemma1} is a one-sided inequality.  The
proof of a similar result (Lemma~\ref{lemma:lemma2}) with the roles of
$R(h)$ and $\h R_S(h)$ interchanged makes use of the following
theorem.

\begin{lemma}[\cite{greenbergmohri2013}]
\label{lemma:binomial2}
Let $X$ be a random variable distributed according to the binomial
distribution $B(m, p)$ with $m$ a positive integer and $p < 1 -
\frac{1}{m}$.  Then, the following inequality holds:
\begin{equation}
\label{eq:main}
\Pr\Big[X \leq  \E[X] \Big] > \frac{1}{4}, 
\end{equation}
where $\E[X] = mp$. 
\end{lemma}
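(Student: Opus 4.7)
The plan is to deduce Lemma~\ref{lemma:binomial2} from Lemma~\ref{lemma:binomial} by exploiting the standard duality between successes and failures in a binomial experiment. If $X$ counts the number of successes in $m$ independent Bernoulli$(p)$ trials, then $Y = m - X$ counts the failures and is distributed according to $B(m, 1-p)$. Its expectation is $\E[Y] = m - \E[X] = m(1-p)$, and the event $\{X \leq \E[X]\}$ coincides exactly with the event $\{m - X \geq m - \E[X]\}$, i.e., $\{Y \geq \E[Y]\}$. Hence $\Pr[X \leq \E[X]] = \Pr[Y \geq \E[Y]]$.

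To apply Lemma~\ref{lemma:binomial} to $Y$, one needs its success parameter to exceed $1/m$. Since $Y \sim B(m, 1 - p)$, this condition becomes $1 - p > 1/m$, which is precisely the hypothesis $p < 1 - 1/m$ assumed in Lemma~\ref{lemma:binomial2}. Lemma~\ref{lemma:binomial} then yields $\Pr[Y \geq \E[Y]] > 1/4$, which is exactly the desired bound $\Pr[X \leq \E[X]] > 1/4$.

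There is essentially no obstacle to this argument: the reduction is a one-line symmetry, and the strict inequalities in the two hypotheses line up without any boundary concerns. All of the genuine difficulty — establishing that a binomial random variable meets or exceeds its mean with probability greater than $1/4$ whenever its success probability exceeds $1/m$ — has already been absorbed into Lemma~\ref{lemma:binomial}, whose proof is supplied in \cite{greenbergmohri2013}. Consequently the proof of Lemma~\ref{lemma:binomial2} should consist only of the substitution $Y = m - X$, the identification of $Y$'s distribution, and the direct invocation of Lemma~\ref{lemma:binomial}.
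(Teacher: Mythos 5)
Your reduction is correct, and it is worth noting that the paper itself offers no proof of Lemma~\ref{lemma:binomial2} at all: it simply imports the statement from \cite{greenbergmohri2013}, the same way it imports Lemma~\ref{lemma:binomial}. So there is no in-paper argument to compare against; the only question is whether your symmetry argument is sound, and it is. With $Y = m - X$ one has $Y \sim B(m, 1-p)$, $\E[Y] = m(1-p)$, and the event identity $\{X \leq mp\} = \{m - Y \leq mp\} = \{Y \geq m(1-p)\} = \{Y \geq \E[Y]\}$ holds exactly, with no strict-versus-non-strict mismatch, since the reflection $x \mapsto m - x$ carries $\leq$ to $\geq$. The hypothesis $p < 1 - \frac{1}{m}$ is precisely the condition $1 - p > \frac{1}{m}$ required to apply Lemma~\ref{lemma:binomial} to $Y$, so the conclusion follows. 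Your observation thus shows that, taking Lemma~\ref{lemma:binomial} as given, Lemma~\ref{lemma:binomial2} is an immediate corollary rather than an independent fact needing ``another proof'' — the only content genuinely residing in \cite{greenbergmohri2013} is the lower bound of Lemma~\ref{lemma:binomial} itself (and, if one cares about it, the separate analysis of where the constant $\frac{1}{4}$ is approached, which under your reflection occurs for $m = 2$ as $p \to \frac{1}{2}$ from the left).
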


The proof of the following lemma (Lemma~\ref{lemma:lemma2}) is
novel.\footnote{A version of this lemma is stated in
  \citep{BoucheronBousquetLugosi2005}, but no proof is given.} While
the general strategy of the proof is similar to that of
Lemma~\ref{lemma:lemma1}, there are some non-trivial differences due
to the requirement $p < 1 - \frac{1}{m}$ of
Theorem~\ref{lemma:binomial2}. The proof is not symmetric as shown by
the details given below.

\begin{lemma}
\label{lemma:lemma2}
Let $1 < \alpha \leq 2$. Assume that $m \e^{\frac{\alpha}{\alpha - 1}}
> 1$. Then, for any hypothesis set $H$ and any $\tau > 0$ the
following holds:
\begin{equation*}
\Pr_{S \sim D^m} \bigg[\sup_{h \in H} \frac{\h R_S(h) -  R(h)}{\sqrt[\alpha]{\h R_S(h)+\tau}} > 
\e \bigg]
\leq 4 \Pr_{S, S' \sim D^m} \bigg[\sup_{h \in H} \frac{ \h R_{S'}(h) - \h R_{S}(h) }{
  \sqrt[\alpha]{\frac{1}{2} [\h R_S(h) + \h R_{S'}(h) + \frac{1}{m}] } } > \e \bigg]
\end{equation*}
\end{lemma}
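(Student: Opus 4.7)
The plan is to closely follow the proof of Lemma~\ref{lemma:lemma1}, adapting each ingredient to the asymmetric nature of the quantity being bounded and of the binomial estimate used. First I would establish the analogue of implication \eqref{eq:implication}:
\begin{equation*}
\left(\frac{\h R_S(h) - R(h)}{\sqrt[\alpha]{\h R_S(h) + \tau}} > \e\right) \wedge \bigl(\h R_{S'}(h) \leq R(h)\bigr) \;\Rightarrow\; F(\h R_S(h), \h R_{S'}(h)) > \e,
\end{equation*}
for any $h \in H$. Rewriting the first conjunct as $\h R_S(h) > R(h) + \e (\h R_S(h) + \tau)^{1/\alpha}$ and using $R(h) \geq 0$ and $\tau > 0$ gives the empirical analogue of \eqref{eq:h0}: $\h R_S(h) > \e\,\h R_S(h)^{1/\alpha}$ (hence $\h R_S(h) > \e^{\alpha/(\alpha-1)}$) and $R(h) < \h R_S(h) - \e\,\h R_S(h)^{1/\alpha}$. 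The monotonicity of $F$ (Lemma~\ref{lemma:f}) then yields $F(\h R_S(h), \h R_{S'}(h)) \geq F(\h R_S(h), \h R_S(h) - \e\,\h R_S(h)^{1/\alpha})$, and the same chain of simplifications used in Lemma~\ref{lemma:lemma1}---successively exploiting $\e\,\h R_S(h)^{1/\alpha} \geq \e^{\alpha/(\alpha-1)}$ and $m\e^{\alpha/(\alpha-1)} > 1$---pushes this quantity strictly above $\e$.

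Next, I would use the definition of supremum to pick, for any $\eta > 0$, an $h_0 \in H$ within $\eta$ of $\sup_{h \in H} \frac{\h R_S(h) - R(h)}{\sqrt[\alpha]{\h R_S(h) + \tau}}$, then combine the implication with the independence of $S$ and $S'$ to obtain
\begin{equation*}
\Pr_{S,S'}\!\left[\sup_{h \in H} F(\h R_S(h), \h R_{S'}(h)) > \e\right] \;\geq\; \Pr_S\!\left[\frac{\h R_S(h_0) - R(h_0)}{\sqrt[\alpha]{\h R_S(h_0) + \tau}} > \e\right] \cdot \Pr_{S'}\!\left[\h R_{S'}(h_0) \leq R(h_0)\right].
\end{equation*}

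The main obstacle, and the source of asymmetry with Lemma~\ref{lemma:lemma1}, is bounding the last factor below by $\tfrac{1}{4}$. Lemma~\ref{lemma:binomial2} delivers exactly this bound, but requires $R(h_0) < 1 - \tfrac{1}{m}$, so I would split on whether $R(h_0) < 1 - \tfrac{1}{m}$ or $R(h_0) \geq 1 - \tfrac{1}{m}$. In the first case, Lemma~\ref{lemma:binomial2} applies directly and yields the desired factor $\tfrac{1}{4}$. In the second case, I would argue that the event on $S$ in the displayed inequality is empty: the condition would force $\h R_S(h_0) - R(h_0) > \e\,\h R_S(h_0)^{1/\alpha} > \e \cdot \e^{1/(\alpha-1)} = \e^{\alpha/(\alpha-1)} > \tfrac{1}{m}$, whereas $\h R_S(h_0) \leq 1$ together with $R(h_0) \geq 1 - \tfrac{1}{m}$ forces $\h R_S(h_0) - R(h_0) \leq \tfrac{1}{m}$, a contradiction. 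Hence the analogue of \eqref{eq:9} holds trivially in this case. Letting $\eta \to 0$ and invoking the right-continuity of the cumulative distribution function, as at the end of the proof of Lemma~\ref{lemma:lemma1}, then concludes the proof.
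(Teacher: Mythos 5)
Your proposal is correct and follows essentially the same route as the paper: the asymmetric implication with antecedent $\h R_{S'}(h) \le R(h)$, the monotonicity of $F$, the near-supremum $h_0$, Lemma~\ref{lemma:binomial2} guarded by the condition $R(h_0) < 1 - \frac{1}{m}$, and the $\eta \to 0$ limit. Your explicit case split on $R(h_0) \ge 1 - \frac{1}{m}$ (showing the event on $S$ is then empty) is just the contrapositive of the paper's observation that the event itself forces $R(h) < 1 - \e^{\frac{\alpha}{\alpha-1}} < 1 - \frac{1}{m}$, and is if anything slightly more explicit.
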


\begin{proof}
  Proceeding in a way similar to the proof of Lemma~\ref{lemma:lemma1}, we first
  show that the following implication holds for any $h \in H$:
\begin{equation}
\label{eq:implication2}
\left( \frac{\h R_S(h) - R(h)}{\sqrt[\alpha]{\h R_S(h) + \tau}} > \e
\right) \wedge \left(  R(h) \geq \h R_{S'}(h) \right) \Rightarrow F(\h R_S(h), \h R_{S'} (h)) > \e.
\end{equation}
The first condition can be equivalently rewritten as $R(h) < \h R_S(h)
- \e (\h R_S(h) + \tau)^{\frac{1}{\alpha}}$, which implies
\begin{equation}   
\label{eq:h1}
R(h) < \h R_S(h) - \e \h R_S(h)^{\frac{1}{\alpha}}\\
\qquad \text{and} \qquad \e^{\frac{\alpha}{\alpha - 1}} < \h R_S(h),
\end{equation}
since $\h R_S(h) \ge 0$. Assume that the antecedent of the implication
\eqref{eq:implication2} holds for $h \in H$. Then, in view of the
monotonicity properties of function $F$ (Lemma~\ref{lemma:f}), we can
write:
\begin{align*} 
F(\h R_{S}(h), \h R_{S'}(h))
& \geq F(\h R_{S}(h), R(h)) & \text{($ R(h) \geq \h R_{S'}(h) $)}\\
& \geq F(\h R_{S}(h), \h R_S(h) - \e \h R_S(h)^{\frac{1}{\alpha}}) & \text{(1st ineq. of \eqref{eq:h1})}\\ 
& = \frac{\h R_S(h) - (\h R_S(h) - \e 
  \h R_S(h)^{\frac{1}{\alpha}})}{\sqrt[\alpha]{\frac{1}{2}[2 \h R_S(h) - \e 
  \h R_S(h)^{\frac{1}{\alpha}} + \frac{1}{m}]}} \\
& \geq  \frac{\e R(h)^{\frac{1}{\alpha}}}{\sqrt[\alpha]{\frac{1}{2}[2 R(h) -
  \e^{\frac{\alpha}{\alpha - 1}} + \frac{1}{m}]}}  & \text{(2nd ineq. of \eqref{eq:h1})}\\
& > \frac{\e R(h)^{\frac{1}{\alpha}}}{\sqrt[\alpha]{\frac{1}{2}[2 R(h)
  ]}} = \e,  & \text{($m \e^{\frac{\alpha}{\alpha - 1}} > 1$)}
\end{align*}
which proves \eqref{eq:implication2}. For the application of
Theorem~\ref{lemma:binomial2} to a hypothesis $h$, the condition $R(h) <
1 - \frac{1}{m}$ is required. Observe that this is implied by the
assumptions $\h R_S(h) \ge \e^{\frac{\alpha}{\alpha - 1}}$ and $m
\e^{\frac{\alpha}{\alpha-1}} > 1$:
\begin{equation*}
R(h) < \h R_S(h) - \e \sqrt[\alpha]{\h R_S(h)} 
\leq 1 - \e \hspace{2pt} \e^{\frac{1}{\alpha - 1}}
= 1 - \e^{\frac{\alpha}{\alpha - 1}} 
< 1- \frac{1}{m}.
\end{equation*}
The rest of the proof proceeds nearly identically to that of
Lemma~\ref{lemma:lemma1}.
\end{proof}

In the statements of all the following results, the term $\E_{x_1^{2m}
  \sim D^{2m}}[\S_H(x_1^{2m})]$ can be replaced by the upper bound
$\Pi_{2m}(H)$ to derive simpler expressions.  By Sauer's lemma
\citep{Sauer72,VapnikChervonenkis71}, the VC-dimension $d$ of the
family $H$ can be further used to bound these quantities since
$\Pi_{2m}(H) \leq \big(\frac{2e m}{d} \big)^d$ for $d \leq 2m$.  The
first inequality of the following theorem was originally stated and
proven by \cite{vapnik98,vapnik06}, later by
\cite{AnthonyShawe-Taylor1993} (in the special case $\alpha = 2$) with
a somewhat more favorable constant, in both cases modulo the
incomplete proof of the symmetrization and the
technical issue related to the denominator taking the value zero, as
already pointed out. The second inequality of the theorem and its
proof are novel. Our proofs benefit from the improved analysis of
\cite{AnthonyShawe-Taylor1993}.

\begin{theorem}
\label{th:relative}
For any hypothesis set $H$ of functions mapping a set $\cX$ to
$\set{0, 1}$, and any fixed $1 < \alpha \leq 2$ and $\tau > 0$, the
following two inequalities hold:
\begin{align*}
  & \Pr_{S \sim D^m} \bigg[\sup_{h \in H} \frac{R(h) - \h
    R_S(h)}{\sqrt[\alpha]{R(h) + \tau}} > \e \bigg] \leq 4 \,
  \E[\S_H(x_1^{2m})] \exp \bigg( \frac{-m^{\frac{2 (\alpha -
        1)}{\alpha}} \e^2 }{2^{\frac{\alpha + 2}{\alpha}} }  \bigg) \\
  & \Pr_{S \sim D^m} \bigg[\sup_{h \in H} \frac{\h R_S(h) -
    R(h)}{\sqrt[\alpha]{\h R_S(h) + \tau}} > \e \bigg] \leq 4 \,
  \E[\S_H(x_1^{2m})] \exp \bigg( \frac{-m^{\frac{2 (\alpha -
        1)}{\alpha}} \e^2 }{2^{\frac{\alpha + 2}{\alpha}} } \bigg).
\end{align*}
\end{theorem}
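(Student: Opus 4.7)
The plan is to combine the symmetrization lemmas with a swap-based symmetrization on the pooled sample, a union bound over dichotomies, and a one-sided Hoeffding inequality. First, dispose of the trivial case: if $m\e^{\alpha/(\alpha-1)}\leq 1$ then $m^{2(\alpha-1)/\alpha}\e^2 = (m^{(\alpha-1)/\alpha}\e)^2\leq 1$, so each right-hand side exceeds $4e^{-1}>1$ and both bounds hold vacuously. So assume $m\e^{\alpha/(\alpha-1)}>1$, which is precisely the hypothesis of Lemmas~\ref{lemma:lemma1} and~\ref{lemma:lemma2}. Applying Lemma~\ref{lemma:lemma1} to the first inequality and Lemma~\ref{lemma:lemma2} to the second reduces both claims, up to the factor~$4$, to bounding the same symmetric probability
$$\Pr_{S,S'\sim D^m}\bigg[\sup_{h\in H}\frac{\h R_{S'}(h)-\h R_S(h)}{\sqrt[\alpha]{\frac{1}{2}[\h R_S(h)+\h R_{S'}(h)+\frac{1}{m}]}}>\e\bigg].$$

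I next condition on the pooled sample $T=(z_1,\ldots,z_{2m})$ and invoke exchangeability: the joint law of $(S,S')$ is reproduced by drawing i.i.d.\ Rademacher variables $\sigma_1,\ldots,\sigma_m$ that, for each $i$, either keep the pair $(z_i,z_{i+m})$ as the $(S,S')$ contribution or swap it. Writing $\ell_j(h)=1_{h(x_j)\neq y_j}$ and $c_i(h)=\ell_{i+m}(h)-\ell_i(h)\in\{-1,0,+1\}$, this gives $\h R_{S'}(h)-\h R_S(h)=\frac{1}{m}\sum_{i=1}^m\sigma_i c_i(h)$, while the denominator depends on $T$ alone through $k(h):=m[\h R_S(h)+\h R_{S'}(h)]=\sum_{j=1}^{2m}\ell_j(h)$. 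A union bound over the at most $\S_H(T)\leq\S_H(x_1^{2m})$ distinct labelings $H$ induces on~$T$ then removes the supremum.

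For each fixed labeling, I apply the one-sided Hoeffding inequality to the centered sum $\sum_i\sigma_i c_i(h)$: letting $k'(h):=\#\{i:c_i(h)\neq 0\}\leq k(h)$, the variables $\sigma_i c_i(h)\in[-|c_i(h)|,|c_i(h)|]$ yield a conditional bound of $\exp\!\big(-\e^2(k(h)+1)^{2/\alpha}m^{2(\alpha-1)/\alpha}/(2^{(\alpha+2)/\alpha}k'(h))\big)$. The decisive numerical step is verifying $(k+1)^{2/\alpha}\geq k'$: for $\alpha\leq 2$ we have $2/\alpha\geq 1$, hence $(k+1)^{2/\alpha}\geq k+1>k\geq k'$ whenever $k'\geq 1$; and $k'=0$ forces $\sum_i\sigma_i c_i(h)\equiv 0$, making the event impossible. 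Thus each conditional probability is uniformly at most $\exp\!\big(-m^{2(\alpha-1)/\alpha}\e^2/2^{(\alpha+2)/\alpha}\big)$; taking expectation over $T\sim D^{2m}$ produces the factor $\E[\S_H(x_1^{2m})]$, and multiplying by the symmetrization factor~$4$ yields both inequalities simultaneously.

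The main obstacle I expect is precisely the numerical check $(k+1)^{2/\alpha}\geq k'$ that pins the Hoeffding exponent to the clean rate. This is where the $+\frac{1}{m}$ floor inside the radical of the symmetric expression pays off: without it, the $k=0$ edge case and the small-$k$ regime would leave the exponent strictly weaker than claimed. The restriction $\alpha\leq 2$ is essential at this step (it is what makes $2/\alpha\geq 1$), which explains why the $\alpha>2$ case requires a genuinely different technique as alluded to in the introduction. Otherwise, everything is standard bookkeeping, and the second inequality follows from the exact same post-symmetrization argument once Lemma~\ref{lemma:lemma2} has matched its right-hand side to that of Lemma~\ref{lemma:lemma1}.
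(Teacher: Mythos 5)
Your proposal is correct and follows essentially the same route as the paper's proof: dispatch the case $m\e^{\alpha/(\alpha-1)}\le 1$ trivially, invoke Lemmas~\ref{lemma:lemma1} and~\ref{lemma:lemma2}, symmetrize with Rademacher swaps, union-bound over the $\S_H(x_1^{2m})$ dichotomies, and apply Hoeffding with the key observation that $2/\alpha\ge 1$ makes the exponent collapse to the clean rate. Your handling of the edge case via $(k+1)^{2/\alpha}\ge k+1>k\ge k'$ (keeping the $+\frac{1}{m}$ floor) is a minor, slightly cleaner variant of the paper's use of the integrality of $\sum_i\big(h(x_{m+i})+h(x_i)\big)$, but it is not a different argument.
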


\begin{proof}
  We first consider the case where $m \e^{\frac{\alpha}{\alpha - 1}}
  \le 1$, which is not covered by Lemma~\ref{lemma:lemma1}. We can
  then write
\begin{align*}
4 \E[\S_H(x_1^{2m})] \exp \bigg[   \frac{-m^{\frac{2 (\alpha -
      1)}{\alpha}} \e^2 }{2^{\frac{\alpha + 2}{\alpha}} }  \bigg]
\ge 
%4 \Pi_{2m}(H)  \exp \bigg[   \frac{-m^{\frac{2 (\alpha - 1)}{\alpha}} ( m^{-\frac{\alpha-1}{\alpha}})^2 }{2^{\frac{\alpha + 2}{\alpha}} }  \bigg]
%= 
4 \E[\S_H(x_1^{2m})] 
\exp \bigg[   \frac{-1}{2^{\frac{\alpha + 2}{\alpha}} }  \bigg] > 1,
\end{align*} 
for $1 < \alpha \le 2$. Thus, the bounds of the theorem hold trivially
in that case.  On the other hand, when $m \e^{\frac{\alpha}{\alpha -
    1}} \ge 1$, we can apply Lemma~\ref{lemma:lemma1} and
Lemma~\ref{lemma:lemma2}. Therefore, to prove
theorem~\ref{th:relative}, it is sufficient to work with the
symmetrized expression $\sup_{h \in H} \frac{ \h R_{S'}(h) - \h
  R_{S}(h) }{ \sqrt[\alpha]{\frac{1}{2} [\h R_S(h) + \h R_{S'}(h) +
    \frac{1}{m}]} }$, rather than working directly with our original
expressions $\sup_{h \in H} \frac{R(h) - \h R_S(h)}{\sqrt[\alpha]{R(h)
    + \tau}}$ and $\sup_{h \in H} \frac{\h R(h) -
  R_S(h)}{\sqrt[\alpha]{\h R(h) + \tau}}$. To upper bound the
probability that the symmetrized expression is larger than $\e$, we
begin by introducing a vector of Rademacher random variables $\sigma =
(\sigma_{1}, \sigma_{2}, \hdots, \sigma_{m})$, where the $\sigma_{i}$
are independent, identically distributed random variables each equally
likely to take the value $+1$ or $-1$. 
Using the shorthand $x_1^{2m}$ for $(x_1, \ldots, x_{2m})$,
we can then write
\begin{multline*}
\Pr_{S, S' \sim D^m} \bigg[\sup_{h \in H} \frac{ \h R_{S'}(h) - \h R_{S}(h) }{
  \sqrt[\alpha]{\frac{1}{2} [\h R_S(h) + \h R_{S'}(h) + \frac{1}{m}]} } > \e \bigg]\\
\begin{aligned}
& = \Pr_{x_1^{2m} \sim D^{2m}} \bigg[\sup_{h \in H} \frac{ \frac{1}{m}\sum_{i = 1}^m
  (h(x_{m + i}) - h(x_i)) }{
  \sqrt[\alpha]{\frac{1}{2m} [\sum_{i = 1}^m
  (h(x_{m + i}) + h(x_i)) + 1]} } > \e \bigg]\\
& = \Pr_{x_1^{2m} \sim D^{2m}, \ssigma} \bigg[\sup_{h \in H} \frac{
  \frac{1}{m} \sum_{i = 1}^m
  \sigma_i  (h(x_{m + i}) - h(x_i)) }{
  \sqrt[\alpha]{\frac{1}{2m} [\sum_{i = 1}^m
  (h(x_{m + i}) + h(x_i)) + 1]} } > \e \bigg]\\
& = \E_{x_1^{2m} \sim D^{2m}} \bigg[ \Pr_{\ssigma} \bigg[\sup_{h \in H} \frac{\frac{1}{m} \sum_{i = 1}^m
  \sigma_i  (h(x_{m + i}) - h(x_i)) }{
  \sqrt[\alpha]{\frac{1}{2m} [\sum_{i = 1}^m
  (h(x_{m + i}) + h(x_i))+1]} } > \e \, \bigg| \, x_1^{2m} \bigg]\bigg].
\end{aligned}
\end{multline*}
Now, for a fixed $x_1^{2m}$, we have $\E_{\ssigma}\bigg[
\frac{ \frac{1}{m} \sum_{i = 1}^m \sigma_i (h(x_{m + i}) - h(x_i)) }{
  \sqrt[\alpha]{\frac{1}{2m} [\sum_{i = 1}^m (h(x_{m + i}) + h(x_i))+1]} } \bigg]
= 0$, thus, by Hoeffding's inequality, we can write
\begin{multline*}
\Pr_{\ssigma} \left[\frac{\frac{1}{m} \sum_{i = 1}^m
  \sigma_i  (h(x_{m + i}) - h(x_i)) }{
  \sqrt[\alpha]{\frac{1}{2m} [\sum_{i = 1}^m
  (h(x_{m + i}) + h(x_i))]} } > \e \, \bigg| \, x_1^{2m} \right] \\
\begin{aligned}
&\leq
\exp \bigg(   \frac{-[\sum_{i = 1}^{2m}  (h(x_{m + i}) + h(x_i)) + 1]^{\frac{2}{\alpha}}
  m^{\frac{2 (\alpha - 1)}{\alpha}}\e^2 }{2^{\frac{\alpha + 2}{\alpha}} \sum_{i = 1}^m (h(x_{m + i}) - h(x_i))^2}  \bigg)  \\
&\leq
\exp \bigg(   \frac{-[\sum_{i = 1}^{2m}  (h(x_{m + i}) + h(x_i))]^{\frac{2}{\alpha}}
  m^{\frac{2 (\alpha - 1)}{\alpha}}\e^2 }{2^{\frac{\alpha + 2}{\alpha}} \sum_{i = 1}^m (h(x_{m + i}) - h(x_i))^2}  \bigg).
\end{aligned}
\end{multline*}
Since the variables $h(x_i)$, $i \in [1, 2m]$, take values in $\set{0,
  1}$, we can write
\begin{align*}
\sum_{i = 1}^m (h(x_{m + i}) - h(x_i))^2 
%& = \sum_{i = 1}^m h(x_{m + i})^2 + h(x_i)^2 -2 h(x_{m + i}) h(x_i)\\
& = \sum_{i = 1}^m h(x_{m + i}) + h(x_i) -2 h(x_{m + i})h(x_i)\\
& \leq \sum_{i = 1}^m h(x_{m + i}) + h(x_i)
\leq \Big[ \sum_{i = 1}^m h(x_{m + i}) + h(x_i) \Big]^{\frac{2}{\alpha}},
\end{align*}
where the last inequality holds since $\alpha \le 2$ and the sum is
either zero or greater than or equal to one.  In view of this
identity, we can write
\begin{equation*}
\Pr_{\ssigma} \left[ \frac{ \frac{1}{m} \sum_{i = 1}^m
  \sigma_i  (h(x_{m + i}) - h(x_i)) }{
  \sqrt[\alpha]{\frac{1}{2m} [ \sum_{i = 1}^m
  (h(x_{m + i}) + h(x_i))]} } > \e \, \bigg| \, x_1^{2m} \right] \leq
\exp \bigg(   \frac{-m^{\frac{2 (\alpha -
      1)}{\alpha}} \e^2 }{2^{\frac{\alpha + 2}{\alpha}} }  \bigg).
\end{equation*}
We note now that the supremum over $h \in H$ in the left-hand side
expression in the statement of our theorem need not be over all
hypothesis in $H$: without changing its value, we can replace $H$ with
a smaller hypothesis set where only one hypothesis remains for each
unique binary vector $(h(x_1), h(x_2), \ldots, h(x_{2m}))$. The number
of such hypotheses is $\S_H(x_1^{2m})$, thus, by the union bound, the
following holds:
\begin{equation*}
\Pr_{\ssigma} \left[\sup_{h \in H}  \frac{ \sum_{i = 1}^m
  \sigma_i  (h(x_{m + i}) - h(x_i)) }{
  \sqrt[\alpha]{\frac{1}{2} [\sum_{i = 1}^m
  (h(x_{m + i}) + h(x_i))]} } > \e \, \bigg| \, x_1^{2m} \right]
\leq \S_H(x_1^{2m}) \exp \bigg(   \frac{-m^{\frac{2 (\alpha -
      1)}{\alpha}} \e^2 }{2^{\frac{\alpha + 2}{\alpha}} }  \bigg).
\end{equation*}
The result follows by taking expectations with respect to $x_1^{2m}$
and applying Lemma~\ref{lemma:lemma1} and
Lemma~\ref{lemma:lemma2} respectively.
\end{proof}

\begin{corollary}
  Let $1 < \alpha \leq 2$ and let $H$ be a hypothesis set of functions
  mapping $\cX$ to $\set{0, 1}$. Then, for any $\delta > 0$, each of
  the following two inequalities holds with probability at least $1 -
  \delta$:
\begin{align*}
& R(h) - \h R_S(h) \leq 
2^{\frac{\alpha +
    2}{2 \alpha}} \sqrt[\alpha]{R(h)}
\sqrt{\frac{\log \E[\S_H(x_1^{2m})] + \log\frac{4}{\delta}}{m^{\frac{2 (\alpha -
      1)}{\alpha}}} }\\
& \h R(h) - R_S(h)  \leq 2^{\frac{\alpha +
    2}{2 \alpha}} \sqrt[\alpha]{\h R(h)}
\sqrt{\frac{\log \E[\S_H(x_1^{2m})] + \log\frac{4}{\delta}}{m^{\frac{2 (\alpha -
      1)}{\alpha}}} }.
\end{align*}

\end{corollary}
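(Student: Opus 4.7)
The plan is to derive the corollary by inverting each of the two tail bounds in Theorem~\ref{th:relative} and then sending the regularizing constant $\tau$ to zero. Starting from the first inequality, I set
\[
4 \, \E[\S_H(x_1^{2m})] \exp\!\left( \frac{-m^{\frac{2(\alpha-1)}{\alpha}} \e^2}{2^{\frac{\alpha+2}{\alpha}}} \right) = \delta
\]
and solve for $\e$, which yields
\[
\e \;=\; 2^{\frac{\alpha+2}{2\alpha}} \sqrt{\frac{\log \E[\S_H(x_1^{2m})] + \log\frac{4}{\delta}}{m^{\frac{2(\alpha-1)}{\alpha}}}}.
\]
Plugging this into Theorem~\ref{th:relative}, we obtain, for each fixed $\tau > 0$, that with probability at least $1 - \delta$ every $h \in H$ satisfies
\[
R(h) - \h R_S(h) \;\leq\; \sqrt[\alpha]{R(h) + \tau} \cdot 2^{\frac{\alpha+2}{2\alpha}} \sqrt{\frac{\log \E[\S_H(x_1^{2m})] + \log\frac{4}{\delta}}{m^{\frac{2(\alpha-1)}{\alpha}}}}.
\]

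To reach the form stated in the corollary, I need to remove the $\tau$ inside the $\alpha$th root. For this I use a continuity-of-probability argument. For each $\tau > 0$ let
\[
A_\tau \;=\; \Big\{\, \sup_{h \in H} \frac{R(h) - \h R_S(h)}{\sqrt[\alpha]{R(h) + \tau}} > \e \,\Big\},
\]
and note that as $\tau \to 0^+$ the family $\{A_\tau\}$ is monotonically increasing (smaller denominator, larger ratio), so its union $A_0 = \bigcup_{\tau > 0} A_\tau$ is the event $\{\sup_{h \in H} (R(h) - \h R_S(h))/\sqrt[\alpha]{R(h)} > \e\}$ (with the convention that the ratio is $0$ when $R(h) = 0$, which forces $\h R_S(h) = 0$ almost surely). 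Since $\Pr[A_\tau] \leq \delta$ for every $\tau > 0$, continuity of probability from below gives $\Pr[A_0] = \lim_{\tau \to 0^+} \Pr[A_\tau] \leq \delta$, which is exactly the first inequality of the corollary.

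The second inequality follows by the same argument applied to the second bound of Theorem~\ref{th:relative}, with $R(h)$ replaced by $\h R_S(h)$ throughout. The main (and essentially only) subtle point is the $\tau \to 0$ step: one must justify passing to the limit inside the probability, which is why the monotonicity of the events $A_\tau$ is essential. Everything else is algebraic inversion of the exponential tail bound, so I do not expect any further obstacle.
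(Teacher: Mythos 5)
Your proposal is correct and follows exactly the paper's (one-line) proof: set the exponential tail bound equal to $\delta$, solve for $\e$, and let $\tau \to 0$. Your continuity-from-below argument for the events $A_\tau$ supplies the justification for the limit step that the paper leaves implicit, so no gap remains.
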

\begin{proof}
  The result follows directly from Theorem~\ref{th:relative} by setting
  $\delta$ to match the upper bounds and taking the limit $\tau \to 0$.
\end{proof}
For $\alpha = 2$, the inequalities become
\begin{align}
\label{eq:second-1}
& R(h) - \h R_S(h) \leq 2
\sqrt{R(h) \frac{\log \E[\S_H(x_1^{2m})] + \log\frac{4}{\delta}}{m} }\\
\label{eq:second-2}
& \h R_S(h) - R(h)  \leq 2 
\sqrt{\h R(h) \frac{\log \E[\S_H(x_1^{2m})] + \log\frac{4}{\delta}}{m} },
\end{align}
with the familiar dependency
$O\left(\sqrt{\frac{\log(m/d)}{m/d}}\right)$. The advantage of these
relative deviations is clear. For small values of $R(h)$ (or $\h
R(h)$) these inequalities provide tighter guarantees than standard
generalization bounds. Solving the corresponding second-degree
inequalities in $\sqrt{R(h)}$ or $\sqrt{\h R(h)}$ leads to the
following results.
\begin{corollary}
  Let $1 < \alpha \leq 2$ and let $H$ be a hypothesis set of functions
  mapping $\cX$ to $\set{0, 1}$. Then, for any $\delta > 0$, each of
  the following two inequalities holds with probability at least $1 -
  \delta$:
\begin{align*}
& R(h) \leq \h R_S(h) + 2
\sqrt{\h R_S(h) \frac{\log \E[\S_H(x_1^{2m})] + \log\frac{4}{\delta}}{m} }
+ 4 \frac{\log \E[\S_H(x_1^{2m})] + \log\frac{4}{\delta}}{m}\\
& \h R_S(h) \leq R(h) + 2 
\sqrt{R(h) \frac{\log \E[\S_H(x_1^{2m})] + \log\frac{4}{\delta}}{m}
} + 4 \frac{\log \E[\S_H(x_1^{2m})] + \log\frac{4}{\delta}}{m}.
\end{align*}
\end{corollary}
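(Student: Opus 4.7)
The plan is to derive these bounds as immediate algebraic consequences of the $\alpha = 2$ inequalities \eqref{eq:second-1} and \eqref{eq:second-2}, which hold with probability at least $1 - \delta$. Set the shorthand $c = \frac{\log \E[\S_H(x_1^{2m})] + \log(4/\delta)}{m}$. Then \eqref{eq:second-1} says $R(h) - \h R_S(h) \le 2\sqrt{R(h)}\sqrt{c}$, and \eqref{eq:second-2} says $\h R_S(h) - R(h) \le 2\sqrt{\h R_S(h)}\sqrt{c}$. Each of these is a quadratic inequality in the variable $u = \sqrt{R(h)}$ (respectively $u = \sqrt{\h R_S(h)}$), and solving for $u$ and squaring yields the stated bounds.

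Concretely, for the first bound I would set $u = \sqrt{R(h)}$ and rewrite \eqref{eq:second-1} as $u^2 - 2\sqrt{c}\, u - \h R_S(h) \le 0$. Since the leading coefficient is positive and $\h R_S(h) \ge 0$, the quadratic formula gives $u \le \sqrt{c} + \sqrt{c + \h R_S(h)}$. Squaring both sides yields
\begin{equation*}
R(h) \le 2c + \h R_S(h) + 2\sqrt{c\bigl(c + \h R_S(h)\bigr)}.
\end{equation*}
Then I would apply the subadditivity of the square root, $\sqrt{c(c + \h R_S(h))} \le c + \sqrt{c\, \h R_S(h)}$, to obtain $R(h) \le \h R_S(h) + 2\sqrt{\h R_S(h)\, c} + 4c$, which is exactly the first stated inequality after unpacking $c$.

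The second inequality is handled by the symmetric manipulation starting from \eqref{eq:second-2}: set $u = \sqrt{\h R_S(h)}$ to get $u^2 - 2\sqrt{c}\, u - R(h) \le 0$, apply the quadratic formula to obtain $u \le \sqrt{c} + \sqrt{c + R(h)}$, square, and use $\sqrt{c(c + R(h))} \le c + \sqrt{c\, R(h)}$ to conclude $\h R_S(h) \le R(h) + 2\sqrt{R(h)\, c} + 4c$. Since \eqref{eq:second-1} and \eqref{eq:second-2} each hold with probability at least $1 - \delta$, so do the respective rearranged versions.

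There is no real obstacle here: the argument is purely algebraic and relies only on the quadratic formula and the inequality $\sqrt{a + b} \le \sqrt{a} + \sqrt{b}$. The only minor thing to mention is that the $4c$ term comes from bundling $2c$ from the explicit expansion of $(\sqrt{c} + \sqrt{c + \h R_S(h)})^2$ with another $2c$ produced by the subadditivity step, so the constant $4$ is tight with this line of argument.
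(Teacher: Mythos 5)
Your proposal is correct and follows essentially the same route as the paper: both solve the quadratic inequality in $\sqrt{R(h)}$ (resp.\ $\sqrt{\h R_S(h)}$) coming from the $\alpha = 2$ relative deviation bound, square the resulting root bound, and use subadditivity of the square root to arrive at the $4c$ constant. The only difference is notational (the paper names the constant $u = \sqrt{c}$ and applies subadditivity inside the cross term, while you apply it to $\sqrt{c(c+\h R_S(h))}$), which is the same estimate.
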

\begin{proof}
The second-degree inequality corresponding to \eqref{eq:second-1} can
be written as
\begin{equation*}
\sqrt{R(h)}^2 - 2 \sqrt{R(h)} u - \h R_S(h) \leq 0,
\end{equation*}
with $u = \sqrt{\frac{\log \E[\S_H(x_1^{2m})] +
    \log\frac{4}{\delta}}{m} }$, and implies
$\sqrt{R(h)} \leq u + \sqrt{u^2 + \h R_S(h)}$. Squaring both sides
gives:
\begin{align*}
R(h) 
\leq \Big[ u + \sqrt{u^2 + \h R_S(h)} \Big]^2
& = u^2 + 2 u \sqrt{u^2 + \h R_S(h)} + u^2 + \h R_S(h)\\
& \leq u^2 + 2 u \Big(\sqrt{u^2} + \sqrt{\h R_S(h)}\Big) + u^2 + \h
R_S(h)\\
& = 4 u^2 + 2 u \sqrt{\h R_S(h)} + \h
R_S(h).
\end{align*}
The second inequality can be proven in the same way from \eqref{eq:second-2}.
\end{proof}
The learning bounds of the corollary make clear the presence of two
terms: a term in $O(1/m)$ and a term in $O(1/\sqrt{m})$ which admits
as a factor $\h R_S(h)$ or $R(h)$ and which for small values of these
terms can be more favorable than standard bounds.
Theorem~\ref{th:relative} can also be used to prove the following
relative deviation bounds.

The following theorem and its proof assuming the result of
Theorem~\ref{th:relative} were given by \cite{AnthonyBartlett99}.

\begin{theorem}
\label{th:rel}
For all $0 < \e < 1$, $\nu > 0$, the following inequalities hold:
\begin{align*}
& \Pr_{S \sim D^m} \bigg[\sup_{h \in H} \frac{R(h) - \h R_S(h)}{R(h) +
  \h R(h) + \nu} > 
\e \bigg]
\leq 4 \E[\S_H(x_1^{2m})] \exp \bigg( \frac{-m \nu \e^2}{2 (1 - \e^2)}  \bigg)\\
& \Pr_{S \sim D^m} \bigg[\sup_{h \in H} \frac{\h R_S(h) - R(h)}{R(h) +
  \h R(h) + \nu} > 
\e \bigg]
\leq 4 \E[\S_H(x_1^{2m})] \exp \bigg( \frac{-m \nu \e^2}{2 (1 - \e^2)}  \bigg).
\end{align*}
\end{theorem}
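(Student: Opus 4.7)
The plan is to reduce Theorem~\ref{th:rel} directly to Theorem~\ref{th:relative} with $\alpha = 2$. The basic idea is that an event of the form $\frac{R(h) - \h R_S(h)}{R(h) + \h R_S(h) + \nu} > \e$ implies a corresponding event of the form $\frac{R(h) - \h R_S(h)}{\sqrt{R(h) + \nu/2}} > \e'$ for a cleverly chosen $\e'$. Once this implication is established, the probability bound follows by a single application of Theorem~\ref{th:relative}.

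The algebraic core of the argument is the identity
\begin{equation*}
(R(h) + \h R_S(h) + \nu)^2 - (R(h) - \h R_S(h))^2 = (2 R(h) + \nu)(2 \h R_S(h) + \nu).
\end{equation*}
Assuming $R(h) - \h R_S(h) > \e\bigl(R(h) + \h R_S(h) + \nu\bigr)$ and squaring both sides, this identity gives $(1 - \e^2)(R(h) - \h R_S(h))^2 > \e^2 (2 R(h) + \nu)(2 \h R_S(h) + \nu)$. Dropping $2 \h R_S(h) \geq 0$ on the right and using $\e < 1$, I obtain
\begin{equation*}
(R(h) - \h R_S(h))^2 \;>\; \frac{2 \e^2 \nu}{1 - \e^2}\,\Bigl(R(h) + \tfrac{\nu}{2}\Bigr),
\end{equation*}
so taking square roots yields $\frac{R(h) - \h R_S(h)}{\sqrt{R(h) + \nu/2}} > \e\sqrt{\tfrac{2\nu}{1-\e^2}}$.

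With this pointwise implication in hand, the first inequality of Theorem~\ref{th:rel} follows by applying the first inequality of Theorem~\ref{th:relative} at $\alpha = 2$, with $\tau = \nu/2$ and with the threshold $\e' = \e\sqrt{2\nu/(1-\e^2)}$: the exponent becomes
\begin{equation*}
-\frac{m\,\e'^{\,2}}{4} = -\frac{m\,\e^2\,\nu}{2(1-\e^2)},
\end{equation*}
which is exactly the target. For the second inequality, I would repeat the same algebra with the roles of $R(h)$ and $\h R_S(h)$ swapped (noting that the identity and the symmetric dropping step are completely symmetric), obtaining $\frac{\h R_S(h) - R(h)}{\sqrt{\h R_S(h) + \nu/2}} > \e'$, and then invoke the \emph{second} inequality of Theorem~\ref{th:relative} with the same parameters.

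There is no real obstacle: Theorem~\ref{th:relative} does all the heavy lifting, and the only ingredient that is non-routine is noticing the factored identity above, which produces exactly the $1/(1-\e^2)$ factor present in the target exponent. The hypothesis $\e < 1$ is precisely what is needed to divide by $1 - \e^2$, and $\nu > 0$ makes the denominator strictly positive so there is no discontinuity issue to handle.
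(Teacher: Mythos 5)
Your proof is correct, and it follows the same overall reduction as the paper — establish a pointwise implication from the event $\frac{R(h) - \h R_S(h)}{R(h)+\h R_S(h)+\nu} > \e$ to an event of the form $\frac{R(h)-\h R_S(h)}{\sqrt{R(h)+\tau}} > \e'$, then invoke Theorem~\ref{th:relative} with $\alpha=2$ — but your derivation of that implication is genuinely different and cleaner. The paper works with the contrapositive, splits into the two cases $R(h)+\tau \le \mu^2\e'^2$ and $R(h)+\tau > \mu^2\e'^2$ for a free parameter $\mu>1$, tunes $\mu$ and $\e'$ to match the target, and must finally send $\tau\to 0$ because its threshold $\e'^2 = 2\e^2(\nu-2\tau)/(1-\e^2)$ retains a residual $\tau$-dependence. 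Your factorization $(R+\h R_S+\nu)^2 - (R-\h R_S)^2 = (2R+\nu)(2\h R_S+\nu)$ yields the exact constant $\e'^2 = 2\e^2\nu/(1-\e^2)$ in one step with the concrete choice $\tau=\nu/2$, so no case analysis and no limiting argument are needed; the symmetry of the factorization also makes the second inequality immediate. The only points worth making explicit in a full write-up are that both sides of $R(h)-\h R_S(h) > \e\,(R(h)+\h R_S(h)+\nu)$ are positive before squaring (true since $\nu>0$ and $\e>0$), and that the supremum exceeding $\e$ yields some $h$ realizing the strict inequality, so the pointwise implication transfers to the suprema and the probabilities compare by monotonicity.
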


\begin{proof}
  We prove the first statement, the proof of the second statement is
  identical modulo the permutation of the roles of $R(h)$ and $\h
  R_S(h)$. To do so, it suffices to determine $\e' > 0$
  such that
\begin{equation*}
\Pr_{S \sim D^m} \bigg[\sup_{h \in H} \frac{R(h) - \h R_S(h)}{R(h) + \h R(h) + \nu} > 
\e \bigg]
\le
\Pr_{S \sim D^m} \bigg[\sup_{h \in H} \frac{R(h) - \h R_S(h)}{\sqrt[\alpha]{R(h) + \tau}} > \e' \bigg],
\end{equation*}
since we can then apply theorem~\ref{th:relative} with $\alpha = 2$ to
bound the right-hand side and take the limit as $\tau \to 0$ to
eliminate the $\tau$-dependence. To find such a choice of $\e'$, we
begin by observing that for any $h \in H$,
\begin{equation}
\label{eq:4}
\frac{R(h) - \h R_S(h)}{R(h) + \h R(h) + \nu} \le \e \Leftrightarrow
R(h)  \le  \frac{1+\e}{1-\e} \h R_S(h)  +  \frac{\e}{1-\e} \nu.
\end{equation}
Assume now that $\frac{R(h) - \h R_{S}(h)}{\sqrt{R(h) +\tau}} \le \e'$
for some $\e' > 0$, which is equivalent to $R(h) \le \h R_{S}(h) + \e'
\sqrt{R(h) + \tau}$. We will prove that this implies \eqref{eq:4}.  To
show that, we distinguish two cases, $R(h) + \tau \le \mu^2 \e'^2$ and
$R(h) + \tau > \mu^2 \e'^2$, with $\mu > 1$.
The first case implies the following:
\begin{equation*}
R(h) + \tau \le \mu^2 \e'^2
\Rightarrow R(h) \le \h R_{S}(h) + \e' \sqrt{\mu^2 \e'^2}
\Leftrightarrow  R(h) \le \h R_{S}(h) + \mu \e'^2.
\end{equation*}
The second case $R(h) + \tau > \mu^2 \e'^2$ is equivalent
to $\e' < \frac{\sqrt{R(h) + \tau}}{\mu}$ and implies
\begin{align*}
\e' < \frac{\sqrt{R(h) + \tau}}{\mu}
\Rightarrow  R(h) \le \h R_S(h) + \frac{R(h) + \tau}{\mu}
\Leftrightarrow R(h) \le \frac{\mu}{\mu-1} \h R_S(h) + \frac{\tau}{\mu-1}.
\end{align*}
Observe now that since $\frac{\mu}{\mu-1} > 1$, both cases imply
\begin{equation}
\label{eq:hither}
R(h) \le \frac{\mu}{\mu-1} \h R_S(h) + \frac{\tau}{\mu-1} + \mu \e'^2.
\end{equation}
We now choose $\e'$ and $\mu$ to make \eqref{eq:hither} match
\eqref{eq:4} by setting $\frac{\mu}{\mu-1} = \frac{1+\e}{1-\e}$ and
$\frac{\tau}{\mu-1} + \mu \e'^2 = \frac{\e}{1-\e} \nu$, which gives:
\begin{equation*}
\mu = \frac{1+\e}{2 \e} \hspace{30pt}  \e'^2 =  \frac{2 \e^2 (\nu - 2 \tau)}{1 - \e^2}.
\end{equation*}
With these choices, the following inequality holds for all $h \in H$:
\begin{align*}
  \frac{R(h) - \h R_S(h)}{\sqrt{R(h) + \tau}} \le \e' \Rightarrow
  \frac{R(h) - \h R_S(h)}{R(h) + \h R(h) + \nu} \le \e,
\end{align*}
which concludes the proof.
\end{proof}
The following corollary was given by \cite{AnthonyBartlett99}.

\begin{corollary}
\label{cor:6}
For all $\e > 0$, $v > 0$, the following inequality holds:
\begin{equation*}
\Pr_{S \sim D^m} \bigg[\sup_{h \in H} R(h) - (1 + v) \h R_S(h) > \e 
\bigg]
\leq 4 \E[\S_H(x_1^{2m})] \exp \bigg( \frac{-m v \e }{4 (1 + v)}  \bigg).
\end{equation*}
\end{corollary}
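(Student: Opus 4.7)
The plan is to reduce Corollary~\ref{cor:6} to the first inequality of Theorem~\ref{th:rel} by a direct algebraic change of variables. The key observation is that the condition $\frac{R(h) - \h R_S(h)}{R(h) + \h R_S(h) + \nu} \le \e_0$ can be rearranged as $R(h) \le \frac{1+\e_0}{1-\e_0}\h R_S(h) + \frac{\e_0}{1-\e_0}\nu$, which has exactly the form $R(h) \le (1+v)\h R_S(h) + \e$ that appears in the statement of the corollary. So the implication between the two events holds provided we choose $\e_0$ and $\nu$ so that
\begin{equation*}
\frac{1+\e_0}{1-\e_0} = 1 + v \qquad\text{and}\qquad \frac{\e_0}{1-\e_0}\nu = \e.
\end{equation*}

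Solving the first equation yields $\e_0 = \frac{v}{2+v}$, and then $\frac{\e_0}{1-\e_0} = \frac{v}{2}$, so the second equation forces $\nu = \frac{2\e}{v}$. With these choices, the event $\{\sup_h (R(h)-(1+v)\h R_S(h)) > \e\}$ is contained in the event $\{\sup_h \frac{R(h)-\h R_S(h)}{R(h)+\h R_S(h)+\nu} > \e_0\}$, so the probability of the former is bounded by that of the latter, which Theorem~\ref{th:rel} controls.

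The only remaining step is a short computation showing that the resulting exponent matches the one claimed. Using $\e_0^2 = \frac{v^2}{(2+v)^2}$ and $1 - \e_0^2 = \frac{4(1+v)}{(2+v)^2}$, one finds $\frac{\e_0^2}{1-\e_0^2} = \frac{v^2}{4(1+v)}$, and hence
\begin{equation*}
\frac{m\,\nu\,\e_0^2}{2(1-\e_0^2)} = \frac{m \cdot \frac{2\e}{v} \cdot v^2}{8(1+v)} = \frac{m\,v\,\e}{4(1+v)},
\end{equation*}
which is exactly the exponent appearing in the corollary.

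No real obstacle is expected here; the argument is entirely a parameter-matching exercise on top of Theorem~\ref{th:rel}. The only mild subtlety is verifying that the chosen $\e_0 = \frac{v}{2+v}$ lies in $(0,1)$ whenever $v>0$ (so that Theorem~\ref{th:rel} is applicable) and that $\nu = \frac{2\e}{v} > 0$, both of which are immediate from the hypotheses $\e,v>0$.
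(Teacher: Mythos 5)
Your proposal is correct and follows essentially the same route as the paper: both rearrange the condition $\frac{R(h)-\h R_S(h)}{R(h)+\h R_S(h)+\nu}>\e_0$ into the affine form $R(h)>\frac{1+\e_0}{1-\e_0}\h R_S(h)+\frac{\e_0\nu}{1-\e_0}$, match parameters to get $\e_0=\frac{v}{2+v}$ and $\nu=\frac{2\e}{v}$, and verify that the exponent in Theorem~\ref{th:rel} becomes $\frac{mv\e}{4(1+v)}$. Your added check that $\e_0\in(0,1)$ and $\nu>0$ is a welcome (if minor) point the paper leaves implicit.
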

\begin{proof}
Observe that 
\begin{equation*}
\frac{R(h) - \h R_S(h)}{R(h) + \h R(h) + \nu} > \e 
\Leftrightarrow R(h) - \h R_S(h) > (R(h) + \h R(h) + \nu) \e
\Leftrightarrow R(h) > \frac{1 + \e}{1 - \e} \h R(h) + 
\frac{\e \nu}{1 - \e}.
\end{equation*}
To derive the statement of the corollary from that of
Theorem~\ref{th:rel}, we identify $\frac{1 + \e}{1 - \e}$ with $1 +
v$, which gives $\e (2 + v) = v$, that is we choose $\e = \frac{v}{2 +
  v}$, and similarly identify $\frac{\e \nu}{1 - \e}$ with $\e'$, that
is $\e' = \frac{\frac{v}{2 + v}}{\frac{2}{2 + v}}\nu = \frac{v}{2}
\nu$, thus we choose $\nu = \frac{2}{v} \e'$. With these choices of
$\e'$ and $\nu$, the coefficient in the exponential appearing in the
bounds of Theorem~\ref{th:rel} can be rewritten as follows:
$\frac{v \e^2}{2(1 - \e^2)} = \frac{2 \e'}{2 v} \frac{\frac{v^2}{(2 +
    v)^2}}{\frac{4v + 4}{(2 + v)^2}} = \frac{\e'}{v} \frac{v^2}{4 (v
  + 1)} = \frac{\e' v}{4 (v + 1)}$, which concludes the proof.
\end{proof}
The result of Corollary~\ref{cor:6} is remarkable since it shows that
a fast convergence rate of $O(1/m)$ can be achieved provided that we
settle for a slightly larger value than the empirical error, one
differing by a fixed factor $(1 + v)$.  The following is an immediate
corollary when $\h R_S(h) = 0$, where we take $v \to \infty$. 

\begin{corollary}
\label{cor:7}
For all $\e > 0$, $v > 0$, the following inequality holds:
\begin{equation*}
\Pr_{S \sim D^m} \bigg[\exists h \in H\colon R(h) > \e \wedge 
 \h R_S(h) = 0 \bigg]
\leq 4 \E[\S_H(x_1^{2m})] \exp \bigg( \frac{-m \e }{4}  \bigg).
\end{equation*}
\end{corollary}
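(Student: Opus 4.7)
The plan is to derive Corollary~\ref{cor:7} as an immediate consequence of Corollary~\ref{cor:6} by a simple containment argument followed by a limit.

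First, I would observe that on the event $\{\exists h \in H \colon R(h) > \e \wedge \h R_S(h) = 0\}$, for this particular $h$ and for any $v > 0$, we have
\begin{equation*}
R(h) - (1 + v) \h R_S(h) = R(h) - 0 = R(h) > \e,
\end{equation*}
and therefore $\sup_{h \in H} [R(h) - (1 + v) \h R_S(h)] > \e$. This gives the event inclusion
\begin{equation*}
\bigl\{\exists h \in H\colon R(h) > \e \wedge \h R_S(h) = 0\bigr\} \subseteq \Bigl\{\sup_{h \in H} R(h) - (1 + v) \h R_S(h) > \e\Bigr\},
\end{equation*}
which holds for every $v > 0$.

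Next, applying Corollary~\ref{cor:6} to the right-hand event yields, for every $v > 0$,
\begin{equation*}
\Pr_{S \sim D^m}\bigl[\exists h \in H\colon R(h) > \e \wedge \h R_S(h) = 0\bigr] \leq 4 \E[\S_H(x_1^{2m})] \exp\!\bigg(\frac{-m v \e}{4 (1 + v)}\bigg).
\end{equation*}
Since the left-hand side does not depend on $v$, I may take the infimum over $v > 0$ on the right. Because $v \mapsto v/(1+v)$ is increasing and tends to $1$ as $v \to \infty$, letting $v \to \infty$ and using continuity of the exponential gives the bound $4 \E[\S_H(x_1^{2m})] \exp(-m\e/4)$, which is exactly the stated inequality.

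There is no real obstacle here: the only thing to check carefully is that Corollary~\ref{cor:6} may legitimately be applied with arbitrarily large $v$ (its statement quantifies over all $v > 0$), and that the limit may be moved inside the probability — which is unnecessary since the left-hand side is independent of $v$ and we simply pass to the limit on the upper bound.
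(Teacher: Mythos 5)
Your proposal is correct and follows exactly the route the paper intends: the paper presents Corollary~\ref{cor:7} as an immediate consequence of Corollary~\ref{cor:6} obtained by setting $\h R_S(h) = 0$ and letting $v \to \infty$. Your event-inclusion step and the passage to the infimum over $v$ (valid since the left-hand side is independent of $v$ and $v/(1+v) \uparrow 1$) make this limiting argument fully explicit.
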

This is the familiar fast rate convergence result for separable cases.

\ignore{
\begin{theorem}
For all $\e > 0$, $\nu > 0$, the following inequality holds:
\begin{equation*}
\Pr_{S \sim D^m} \bigg[\exists h \in H\colon R(h) > \e \wedge 
 \h R_S(h) \leq (1 - \nu) R(h) \bigg] \leq 4 \E[\S_H(x_1^{2m})] \exp \bigg( \frac{-m \nu^2 \e}{4}  \bigg).
\end{equation*}
In particular, for $\nu = 1$, the following holds:
\begin{equation*}
\Pr_{S \sim D^m} \bigg[\exists h \in H\colon R(h) > \e \wedge 
 \h R_S(h) = 0 \bigg] \leq 4 \E[\S_H(x_1^{2m})] \exp \bigg( \frac{-m \e}{4}  \bigg).
\end{equation*}
\end{theorem}
}

\section{Generalization bounds for unbounded losses}
\label{sec:unbounded}

In this section we will make use of the relative deviation bounds
given in the previous section to prove generalization bounds for
unbounded loss functions under the assumption that the moment of order
$\alpha$ of the loss is bounded. We will start with the case $1 <
\alpha \leq 2$ and then move on to considering the case when $\alpha >
2$. As already indicated earlier, the one-sided version of the results
presented in this section were given by \cite{vapnik98} with slightly
different constants, but the proofs do not seem to be correct or
complete. The second statements in all these results (other side of
the inequality) are new. Our proofs for both sets of results are new.

\subsection{Bounded moment with $1 < \alpha \leq 2$}
\label{sec:unbounded1}

Our first theorem reduces the problem of deriving a relative deviation
bound for an unbounded loss function with $\L_\alpha(h) = \E_{z \sim
  D}[L(h, z)^\alpha]< +\infty$ for all $h \in H$, to that of relative
deviation bound for binary classification. To simplify the
presentation of the results, in what follows we will use the shorthand
$\Pr [L(h, z) > t]$ instead of $\Pr_{z \sim D}[L(h, z) > t]$, and
similarly $\h \Pr [L(h, z) > t]$ instead of $\Pr_{z \sim \h D}[L(h, z)
> t]$.

\begin{theorem}
\label{th:main1}
Let $1 < \alpha \leq 2$, $0 < \e \leq 1$, and $0 < \tau^{\frac{\alpha
    - 1}{\alpha}} < \e^{\frac{\alpha}{\alpha - 1}}$. For any loss
function $L$ (not necessarily bounded) and hypothesis set $H$ such
that $\L_\alpha(h) < +\infty$ for all $h \in H$,
the following two inequalities hold:
\begin{align*}
  & \Pr \bigg[\sup_{h \in H} \frac{\L(h) - \h
    \L_S(h)}{\sqrt[\alpha]{\L_\alpha(h) + \tau}} > \Gamma(\alpha,
  \e)\, \e \bigg] \leq \Pr \bigg[\sup_{h \in H, t \in \Rset} \frac{
    \Pr[L(h, z) > t] - \h \Pr[L(h, z) > t]
  }{\sqrt[\alpha]{\Pr[L(h, z) > t] + \tau}} > \e \bigg]\\
  & \Pr \bigg[\sup_{h \in H} \frac{ \L(h) - \h
    \L_S(h)}{\sqrt[\alpha]{\L_\alpha(h) + \tau}}
  > \Gamma(\alpha, \e)\, \e \bigg] \leq \Pr \bigg[\sup_{h \in H, t \in
    \Rset} \frac{ \h \Pr[L(h, z) > t] - \Pr[L(h, z) > t]
  }{\sqrt[\alpha]{ \h \Pr[L(h, z) > t] + \tau}} > \e \bigg],
\end{align*}
with $\Gamma(\alpha, \e) = \frac{\alpha - 1}{\alpha} (1 +
\tau)^{\frac{1}{\alpha}} + \frac{1}{\alpha} \big( \frac{\alpha}{\alpha
  - 1} \big)^{\alpha - 1} (1 + \big(\frac{\alpha -
  1}{\alpha}\big)^\alpha \tau^{\frac{1}{\alpha}})^{\frac{1}{\alpha}}
\Big[ 1 + \frac{\log (1/\e)}{\big( \frac{\alpha}{\alpha - 1}
  \big)^{\alpha - 1}} \Big]^{\frac{\alpha - 1}{\alpha}}$ \ignore{
  $\Gamma(\beta, \e) = \frac{1}{\beta} + \big( \frac{\beta - 1}{\beta}
  \big) \beta^{\frac{1}{\beta - 1}} \Big [1 +
  \frac{\log(1/\e)}{\beta^{\frac{1}{\beta - 1}}}
  \Big]^{\frac{1}{\beta}}$, with $\frac{1}{\alpha} + \frac{1}{\beta} =
  1$}.
\end{theorem}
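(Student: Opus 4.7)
The plan is to argue by contrapositive for the first inequality: I show that whenever the pointwise bound $\Pr[L(h,z) > t] - \h\Pr[L(h,z) > t] \leq \e\,(\Pr[L(h,z) > t] + \tau)^{1/\alpha}$ holds for every $h \in H$ and every $t \in \Rset$, one has $\L(h) - \h\L_S(h) \leq \Gamma(\alpha,\e)\,\e\,(\L_\alpha(h) + \tau)^{1/\alpha}$ for every $h \in H$. Since $L(h,\cdot) \ge 0$, the layer-cake identity $\E[L] = \int_0^\infty \Pr[L > t]\,dt$, applied under both $D$ and $\h D$, gives
$$\L(h) - \h\L_S(h) = \int_0^\infty \bigl(\Pr[L(h,z) > t] - \h\Pr[L(h,z) > t]\bigr)\,dt,$$
so the pointwise bound is now available for the integrand. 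However, integrating the upper bound naively over $[0,\infty)$ fails since the $\tau^{1/\alpha}$ piece is not integrable, so I split the integral at two thresholds $0 < A < B$ to be chosen later.

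On $[0, A]$, I use the crude inequality $(\Pr[L > t] + \tau)^{1/\alpha} \leq (1 + \tau)^{1/\alpha}$, which gives a contribution of at most $\e\,A\,(1+\tau)^{1/\alpha}$. On the tail $[B,\infty)$, I drop the non-negative $\h\Pr$ term and invoke Markov's inequality $\Pr[L(h,z) > t] \leq \L_\alpha(h)/t^\alpha$, contributing at most $\L_\alpha(h)/[(\alpha-1)\,B^{\alpha-1}]$. On the middle region $[A, B]$, I apply H\"older's inequality with conjugate exponents $\alpha$ and $\beta = \alpha/(\alpha-1)$ to the factorization $(\Pr + \tau)^{1/\alpha} = \bigl[(\Pr + \tau)\,t^{\alpha-1}\bigr]^{1/\alpha} \cdot t^{-(\alpha-1)/\alpha}$, obtaining
$$\int_A^B (\Pr + \tau)^{1/\alpha}\,dt \;\leq\; \Bigl(\int_A^B (\Pr + \tau)\, t^{\alpha-1}\,dt\Bigr)^{1/\alpha} \bigl(\log(B/A)\bigr)^{(\alpha-1)/\alpha}.$$
Using $\L_\alpha(h) = \alpha \int_0^\infty t^{\alpha-1}\Pr[L(h,z)>t]\,dt$, the first factor is bounded by $\bigl((\L_\alpha(h) + \tau B^\alpha)/\alpha\bigr)^{1/\alpha}$. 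The logarithmic factor $[\log(B/A)]^{(\alpha-1)/\alpha}$ produced here is precisely what will supply the $[1 + \log(1/\e)/\beta^{\alpha-1}]^{(\alpha-1)/\alpha}$ ingredient in $\Gamma(\alpha,\e)$.

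The remaining step is to choose $A$ and $B$ so that the three contributions collapse into the target $\Gamma(\alpha,\e)\,\e\,(\L_\alpha(h)+\tau)^{1/\alpha}$. A scaling $A \propto (\L_\alpha(h)+\tau)^{1/\alpha}$, tuned so that $\e A (1+\tau)^{1/\alpha}$ matches $\frac{\alpha-1}{\alpha}\e(1+\tau)^{1/\alpha}(\L_\alpha(h)+\tau)^{1/\alpha}$, together with $B$ of the same scale multiplied by a negative power of $\e$ that balances the Markov tail against the middle H\"older piece, is the right template; the tail and middle then fuse into the second summand of $\Gamma$. The main obstacle I expect is matching the exact constants: the Young-type weights $\frac{\alpha-1}{\alpha}$ and $\frac{1}{\alpha}$ in front of the two $\Gamma$ summands, together with the intricate factor $(1 + (\frac{\alpha-1}{\alpha})^\alpha \tau^{1/\alpha})^{1/\alpha}$, require carefully tracking the $\tau B^\alpha$ correction through the middle-region H\"older estimate and making a delicate, not merely order-of-magnitude, choice of $A$ and $B$. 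The second inequality follows by the same three-region argument with the roles of $\Pr$ and $\h\Pr$ interchanged throughout, Markov's inequality being applied to the corresponding tail.
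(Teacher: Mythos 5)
Your overall architecture matches the paper's: contrapositive, layer-cake representation of $\L(h)-\h\L_S(h)$, a split into three regions with the same thresholds ($A=\frac{\alpha-1}{\alpha}(\L_\alpha(h)+\tau)^{1/\alpha}$ and $B=A(1/\e)^{1/(\alpha-1)}$ are exactly the paper's $t_0,t_1$), the crude bound $(\Pr+\tau)^{1/\alpha}\le(1+\tau)^{1/\alpha}$ near zero, and a H\"older/Markov treatment further out. But there is a genuine gap, and it is precisely the step you defer as ``matching the exact constants'': treating the tail $[B,\infty)$ \emph{additively} via Markov's inequality cannot produce the stated $\Gamma(\alpha,\e)$. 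With your choices, the tail contributes $\frac{\nu}{(\alpha-1)B^{\alpha-1}}\le\frac{\e}{(\alpha-1)}\big(\frac{\alpha}{\alpha-1}\big)^{\alpha-1}I_\alpha^{1/\alpha}$ as a separate summand, whereas in the theorem the tail's contribution is the ``$1+$'' sitting \emph{inside} the bracket $\big[1+\frac{\log(1/\e)}{(\alpha/(\alpha-1))^{\alpha-1}}\big]^{\frac{\alpha-1}{\alpha}}$, i.e.\ it is combined with the middle-region term under a single power $\frac{\alpha-1}{\alpha}<1$ and so is strictly cheaper (by concavity of $x\mapsto x^{(\alpha-1)/\alpha}$, $[a+b]^{(\alpha-1)/\alpha}\le a^{(\alpha-1)/\alpha}+b^{(\alpha-1)/\alpha}$, and your tail term is moreover larger than the corresponding split piece by a factor $\alpha/(\alpha-1)$). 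Concretely, at $\alpha=2$, $\tau\to0$, $\e\to1$ your three pieces sum to $\tfrac12+2+\sqrt{\tfrac12\log\tfrac1\e}\to\tfrac52$, while $\Gamma(2,\e)=\tfrac12+\sqrt{1+\tfrac12\log\tfrac1\e}\to\tfrac32$; no admissible retuning of $B=A\e^{-\theta}$ repairs this, since the tail term cannot be driven below $\frac{1}{\alpha-1}(\frac{\alpha}{\alpha-1})^{\alpha-1}\e I_\alpha^{1/\alpha}$ uniformly as $\e\to1$ while keeping $\tau B^\alpha$ under control.

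The fix is the paper's device: define $f$ and $g$ piecewise on \emph{all three} regions (on the tail, $f=\gamma_2[\alpha t^{\alpha-1}\Pr]^{1/\alpha}\e$ and $g=\Pr^{(\alpha-1)/\alpha}/(\gamma_2(\alpha t^{\alpha-1})^{1/\alpha}\e)$, so that $fg=\Pr$ and the Markov-type decay is absorbed into $\int g^{\alpha/(\alpha-1)}$), apply H\"older once globally, and then optimize the block weights $\gamma_1,\gamma_2$ via the collinearity condition that makes the global H\"older bound equal to the inner product $\u\cdot\v$. That optimization is what yields the sum of (region~1) plus (regions~2--3 treated \emph{jointly} under H\"older), and the joint treatment of the middle and tail regions is indispensable for the constant in the theorem. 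Everything else in your outline (the $\tau B^\alpha\le s^\alpha I_\alpha\tau^{1/\alpha}$ control via the hypothesis $\tau^{(\alpha-1)/\alpha}<\e^{\alpha/(\alpha-1)}$, the value $s=\frac{\alpha-1}{\alpha}$, the symmetric argument for the second inequality) is consistent with the paper's proof.
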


\begin{proof}
  We prove the first statement. The second statement can be shown in a
  very similar way. 
  Fix $ 1 < \alpha \le 2$ and $\e > 0$ and§
  assume that for any $h \in H$ and $t \geq 0$, the following
  holds:
\begin{equation}
\label{eq:14}
\frac{\Pr[L(h, z) > t] - \h \Pr[L(h, z) > t] }{\sqrt[\alpha]{\Pr[L(h, z) > t] + \tau}}  \leq \e.
\end{equation}
We show that this implies that for any $h \in H$, $\frac{\L(h) - \h
  \L_{S}(h)}{\sqrt[\alpha]{\L_\alpha(h) + \tau}} \leq \Gamma(\alpha, \e)  \e $. By the properties of the
Lebesgue integral, we can write
\begin{align*}
&  \L(h) = \E_{z \sim D}[L(h, z)] = \int_{0}^{+\infty} \Pr[L(h, z) > t]\,
  dt \\
& \h \L(h) = \E_{z \sim \h D}[L(h, z)] = \int_{0}^{+\infty} \h \Pr[L(h, z) > t]\, dt,
\end{align*}
and, similarly,
\begin{equation*}
  \L_\alpha(h) = \L_\alpha(h) = \int_{0}^{+\infty} \Pr[L^\alpha(h, z) > t]\, dt = \int_{0}^{+\infty} \alpha t^{\alpha - 1} \Pr[L(h, z) > t]\, dt.
\end{equation*}
In what follows, we use the notation $I_\alpha =
\L_\alpha(h) + \tau$.  Let $t_0 = s I_\alpha^{\frac{1}{\alpha}}$ and
$t_1 = t_0 \left[ \frac{1}{\e} \right]^{\frac{1}{\alpha - 1}}$ for
$s > 0$. To bound $\L(h) - \h \L(h)$, we simply bound $\Pr[L(h, z) > t] - \h \Pr[L(h, z) > t]$ by $\Pr[L(h, z) > t]$ for large values of $t$,
that is $t >  t_1$, and use inequality (\ref{eq:14}) for smaller
values of $t$:
\begin{align*}
\L(h) - \h \L(h)
&= \int_{0}^{+\infty} \Pr[L(h, z) > t] - \h \Pr[L(h, z) > t]\, dt\\
& \leq \int_{0}^{t_1}  \e \sqrt[\alpha]{\Pr[L(h, z) > t] + \tau} \,  dt +
\int_{t_1}^{+\infty} \Pr[L(h, z) > t] \, dt.
\end{align*}
For relatively small values of $t$, $\Pr[L(h, z) > t]$ is close to
one. Thus, we can write
\begin{align*}
\L(h) - \h \L(h)
&\leq \int_{0}^{t_0} \e \sqrt[\alpha]{1 + \tau}\,  dt +
\int_{t_0}^{t_1} \e \sqrt[\alpha]{\Pr[L(h, z) > t] + \tau} \, dt + \int_{t_1}^{+\infty} \Pr[L(h, z) > t] dt
\\
&= \int_{0}^{+\infty} f(t) g(t)\,  dt,
\end{align*}
with
\begin{equation*}
f(t) =
\begin{cases}
\gamma_1 I_\alpha^{\frac{\alpha - 1}{\alpha^2}} \e \sqrt[\alpha]{1 + \tau} & \text{if } 0 \leq t \leq t_0\\
\gamma_2 \left[ \alpha t^{\alpha - 1} (\Pr[L(h, z) > t] + \tau) \right]^{\frac{1}{\alpha}}\, \e & \text{if } t_0< t \leq t_1\\
\gamma_2 \left[ \alpha t^{\alpha - 1} \Pr[L(h, z) > t] \right]^{\frac{1}{\alpha}}\, \e & \text{if } t_1 < t.
\end{cases}
\quad
g(t) =
\begin{cases}
\frac{1}{\gamma_1 I_\alpha^{\frac{\alpha - 1}{\alpha^2}}} & \text{if } 0 \leq t \leq t_0\\
\frac{1}{\gamma_2 (\alpha t^{\alpha - 1})^{\frac{1}{\alpha}}} & \text{if } t_0 < t \leq t_1\\
\frac{\Pr[L(h, z) > t]^{\frac{\alpha - 1}{\alpha}}}{\gamma_2 (\alpha t^{\alpha - 1})^{\frac{1}{\alpha}}} \frac{1}{\e} & \text{if } t_1 < t,
\end{cases}
\end{equation*}
where $\gamma_1, \gamma_2$ are positive parameters that we shall select later.
Now, since $\alpha > 1$, by H\"{o}lder's inequality,
\begin{align*}
\L(h) - \h \L(h)
& \leq \left[ \int_{0}^{+\infty} f(t)^\alpha\, dt \right]^{\frac{1}{\alpha}} \, \left[ \int_{0}^{+\infty} g(t)^{\frac{\alpha}{\alpha - 1}}\,  dt \right]^{\frac{\alpha - 1}{\alpha}}.
\end{align*}
The first integral on the right-hand side can be bounded as follows:
\begin{align*}
\int_{0}^{+\infty}  f(t)^\alpha\, dt
& = \int_{0}^{t_0}  (1 + \tau) (\gamma_1 I_\alpha^{\frac{\alpha - 1}{\alpha^2}}
\e)^\alpha  \, dt + 
\gamma_2^\alpha \e^\alpha \tau \int_{t_0}^{t_1} \alpha t^{\alpha - 1} dt 
+
\gamma_2^\alpha \int_{t_0}^{+\infty}  \alpha t^{\alpha - 1} \Pr[L(h, z) > t]  \e^\alpha \, dt\\
& \leq (1 + \tau) \gamma_1^\alpha I_\alpha^{\frac{\alpha - 1}{\alpha}} t_0
\e^\alpha + 
\gamma_2^\alpha \e^\alpha \tau (t_1^\alpha - t_0^\alpha)+ 
\gamma_2^\alpha \e^\alpha I_\alpha \\
& \leq (\gamma_1^\alpha (1 + \tau) s + 
\gamma_2^\alpha (1 + s^\alpha
(1/\e)^{\frac{\alpha}{\alpha - 1}} \tau) ) \e^\alpha I_\alpha\\
& \leq (\gamma_1^\alpha (1 + \tau) s + \gamma_2^\alpha (1 + s^\alpha \tau^{\frac{1}{\alpha}})) \e^\alpha I_\alpha.
\end{align*}
Since $t_1/t_0 = (1/\e)^{\frac{1}{\alpha - 1}}$, the second one can be computed and bounded following
\begin{align*}
\int_{0}^{+\infty}  g(t)^{\frac{\alpha}{\alpha - 1}}\, dt
& = \int_{0}^{t_0}  \frac{dt}{\gamma_1^{\frac{\alpha}{\alpha - 1}} I_\alpha^{\frac{1}{\alpha}}} + \int_{t_0}^{t_1} \frac{1}{\gamma_2^{\frac{\alpha}{\alpha - 1}} \alpha^{\frac{1}{\alpha - 1}} } \frac{dt}{t} +
\int_{t_1}^{+\infty} \frac{\Pr[L(h, z) > t]}{\gamma_2^{\frac{\alpha}{\alpha - 1}} \alpha^{\frac{1}{\alpha - 1}} \e^{\frac{\alpha}{\alpha - 1}} t} dt\\
& = \frac{s}{\gamma_1^{\frac{\alpha}{\alpha - 1}}} + \frac{1}{\gamma_2^{\frac{\alpha}{\alpha - 1}} (\alpha - 1) \alpha^{\frac{1}{\alpha - 1}} } \log \frac{1}{\e} + \int_{t_1}^{+\infty} \frac{\alpha t^{\alpha - 1} \Pr[L(h, z) > t]}{\gamma_2^{\frac{\alpha}{\alpha - 1}}  (\alpha \e)^{\frac{\alpha}{\alpha - 1}} t^\alpha } dt\\
& \leq \frac{s}{\gamma_1^{\frac{\alpha}{\alpha - 1}}} + \frac{1}{\gamma_2^{\frac{\alpha}{\alpha - 1}} (\alpha - 1) \alpha^{\frac{1}{\alpha - 1}} } \log \frac{1}{\e} + \int_{t_1}^{+\infty} \frac{\alpha t^{\alpha - 1} \Pr[L(h, z) > t]}{\gamma_2^{\frac{\alpha}{\alpha - 1}}  (\alpha \e)^{\frac{\alpha}{\alpha - 1}} t_1^\alpha} dt\\
& \leq \frac{s}{\gamma_1^{\frac{\alpha}{\alpha - 1}}} + \frac{1}{\gamma_2^{\frac{\alpha}{\alpha - 1}} (\alpha - 1) \alpha^{\frac{1}{\alpha - 1}} } \log \frac{1}{\e} + \frac{I_\alpha}{\gamma_2^{\frac{\alpha}{\alpha - 1}} (\alpha \e)^{\frac{\alpha}{\alpha - 1}} s^\alpha I_\alpha (\frac{1}{\e})^{\frac{\alpha}{\alpha - 1}}} \\
& = \frac{s}{\gamma_1^{\frac{\alpha}{\alpha - 1}}} + \frac{1}{\gamma_2^{\frac{\alpha}{\alpha - 1}}} \left( \frac{1}{(\alpha - 1) \alpha^{\frac{1}{\alpha - 1}} }  \log \frac{1}{\e} + \frac{1}{\alpha^{\frac{\alpha}{\alpha - 1}} s^\alpha }  \right).
\end{align*}
Combining the bounds obtained for these integrals yields directly
\begin{align*}
&\L(h) - \h \L(h) \\
& \leq \left[ (\gamma_1^\alpha (1 + \tau) s + \gamma_2^\alpha (1 +
  s^\alpha \tau^{\frac{1}{\alpha}})) \e^\alpha I_\alpha \right]^{\frac{1}{\alpha}} \left[ \frac{s}{\gamma_1^{\frac{\alpha}{\alpha - 1}}} + \frac{1}{\gamma_2^{\frac{\alpha}{\alpha - 1}}} \left( \frac{1}{(\alpha - 1) \alpha^{\frac{1}{\alpha - 1}} }  \log \frac{1}{\e} + \frac{1}{\alpha^{\frac{\alpha}{\alpha - 1}} s^\alpha }  \right) \right]^{\frac{\alpha - 1}{\alpha}} \\
& = (\gamma_1^\alpha (1 + \tau) s + \gamma_2^\alpha (1 + s^\alpha \tau^{\frac{1}{\alpha}}))^{\frac{1}{\alpha}} 
\left[ \frac{s}{\gamma_1^{\frac{\alpha}{\alpha - 1}}} + \frac{1}{\gamma_2^{\frac{\alpha}{\alpha - 1}}} \left( \frac{1}{(\alpha - 1) \alpha^{\frac{1}{\alpha - 1}} }  \log \frac{1}{\e} + \frac{1}{\alpha^{\frac{\alpha}{\alpha - 1}} s^\alpha }  \right) \right]^{\frac{\alpha - 1}{\alpha}}
\e I_\alpha^{\frac{1}{\alpha}}.
\end{align*}
Observe that the expression on the right-hand side can be rewritten as
$\| \u \|_\alpha \| \v \|_{\frac{\alpha}{\alpha - 1}} \ \e
I_\alpha^{\frac{1}{\alpha}}$ where the vectors $\u$ and $\v$ are
defined by $\u = (\gamma_1 (1 + \tau)^{\frac{1}{\alpha}}
s^{\frac{1}{\alpha}}, \gamma_2 (1 + s^\alpha
\tau^{\frac{1}{\alpha}})^{\frac{1}{\alpha}})$ and $\v = (v_1, v_2) =
\bigg(\frac{s^{\frac{\alpha - 1}{\alpha}}}{\gamma_1},
\frac{1}{\gamma_2} \Big[ \frac{1}{(\alpha - 1) \alpha^{\frac{1}{\alpha
      - 1}} } \log \frac{1}{\e} +
\frac{1}{\alpha^{\frac{\alpha}{\alpha - 1}} s^\alpha }
\Big]^{\frac{\alpha - 1}{\alpha}} \bigg)$. The inner product $\u \cdot
\v$ does not depend on $\gamma_1$ and $\gamma_2$ and by the properties
of H\"{o}lder's inequality can be reached when $\u$ and the vector
$\v' = (v_1^{\frac{1}{\alpha - 1}}, v_2^{\frac{1}{\alpha - 1}})$ are
collinear. $\gamma_1$ and $\gamma_2$ can be chosen so that
$\text{det}(\u, \v') = 0$, since this condition can be rewritten as
\begin{equation}
 s^{\frac{1}{\alpha}} (1 + \tau)^{\frac{1}{\alpha}} \frac{\gamma_1}{\gamma_2^{\frac{1}{\alpha - 1}}} \Big[ \frac{1}{(\alpha - 1) \alpha^{\frac{1}{\alpha
      - 1}} } \log \frac{1}{\e} +
\frac{1}{\alpha^{\frac{\alpha}{\alpha - 1}} s^\alpha }
\Big]^{\frac{1}{\alpha}} - s^{\frac{1}{\alpha}} (1 + s^\alpha \tau^{\frac{1}{\alpha}})^{\frac{1}{\alpha}} \frac{\gamma_2}{\gamma_1^{\frac{1}{\alpha - 1}}} = 0,
\end{equation}
or equivalently,
\begin{equation}
 \left( \frac{\gamma_1}{\gamma_2} \right)^{\frac{\alpha}{\alpha - 1}} \Big[ \frac{1}{(\alpha - 1) \alpha^{\frac{1}{\alpha
      - 1}} } \log \frac{1}{\e} +
\frac{1}{\alpha^{\frac{\alpha}{\alpha - 1}} s^\alpha }
\Big]^{\frac{1}{\alpha}} - (1 + s^\alpha \tau^{\frac{1}{\alpha}})^{\frac{1}{\alpha}}  = 0.
\end{equation}
Thus, for such values of $\gamma_1$ and $\gamma_2$, the following
inequality holds:
\begin{align*}
\L(h) - \h \L(h)
& \leq (\u \cdot \v)\, \e I_\alpha^{\frac{1}{\alpha}} = f(s) \, \e I_\alpha^{\frac{1}{\alpha}},
\end{align*}
with 
\begin{align*}
f(s) 
& = (1 + \tau)^{\frac{1}{\alpha}}
s + (1 + s^\alpha \tau^{\frac{1}{\alpha}})^{\frac{1}{\alpha}} \Big[ \frac{1}{(\alpha - 1) \alpha^{\frac{1}{\alpha
      - 1}} } \log \frac{1}{\e} +
\frac{1}{\alpha^{\frac{\alpha}{\alpha - 1}} s^\alpha }
\Big]^{\frac{\alpha - 1}{\alpha}} \\
& = (1 + \tau)^{\frac{1}{\alpha}} s + \frac{(1 + s^\alpha \tau^{\frac{1}{\alpha}})^{\frac{1}{\alpha}}}{\alpha} \Big[ \frac{\alpha}{(\alpha - 1) } \log \frac{1}{\e} +
\frac{1}{s^\alpha }
\Big]^{\frac{\alpha - 1}{\alpha}}.
\end{align*}
Setting $s = \frac{\alpha - 1}{\alpha}$ yields the statement of the theorem.
\end{proof}
The next corollary follows immediately by upper bounding the
right-hand side of the learning bounds of theorem~\ref{th:main1} using
theorem~\ref{th:relative}. It provides learning bounds for unbounded
loss functions in terms of the growth functions in the case $1 <
\alpha \leq 2$.

\begin{corollary}
\label{cor:main1vc}
Let $\e < 1$, $1 < \alpha \le 2$, and $0 < \tau^{\frac{\alpha -
    1}{\alpha}} < \e^{\frac{\alpha}{\alpha - 1}}$. For any loss
function $L$ (not necessarily bounded) and hypothesis set $H$ such
that $\L_\alpha(h) < +\infty$ for all $h \in H$, the following
inequalities hold:
\begin{align*}
& \Pr \bigg[\sup_{h \in H} \frac{\L(h) - \h \L(h)}{\sqrt[\alpha]{\L_\alpha(h) + \tau}} > 
\Gamma(\alpha, \epsilon) \epsilon \bigg ]
\leq 4 \, \E[\S_Q(z_1^{2m})] \exp \bigg(   \frac{-m^{\frac{2 (\alpha -
      1)}{\alpha}} \e^2 }{2^{\frac{\alpha + 2}{\alpha}} }  \bigg)\\
& \Pr \bigg[\sup_{h \in H} \frac{\h \L(h) - \L(h)}{\sqrt[\alpha]{\h \L_\alpha(h) + \tau}} > 
\Gamma(\alpha, \epsilon) \epsilon   \bigg]
\leq 4 \, \E[\S_Q(z_1^{2m})] \exp \bigg(   \frac{-m^{\frac{2 (\alpha -
      1)}{\alpha}} \e^2 }{2^{\frac{\alpha + 2}{\alpha}} }  \bigg),
\end{align*}
where $Q$ is the set of functions $Q = \set{z \mapsto 1_{L(h, z) > t}
\mid h \in H, t \in \mathbb{R}}$, and
 $\Gamma(\alpha, \e) = 
\frac{\alpha - 1}{\alpha} (1 + \tau)^{\frac{1}{\alpha}} +
\frac{1}{\alpha} \big( \frac{\alpha}{\alpha - 1} \big)^{\alpha - 1} (1
+ \big(\frac{\alpha - 1}{\alpha}\big)^\alpha \tau^{\frac{1}{\alpha}})^{\frac{1}{\alpha}} \Big[ 1 +  \frac{\log (1/\e)}{\big( \frac{\alpha}{\alpha - 1} \big)^{\alpha - 1}}  \Big]^{\frac{\alpha - 1}{\alpha}}$.
\end{corollary}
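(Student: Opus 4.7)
The plan is to chain together the two main results already proved in the section: Theorem~\ref{th:main1} reduces the unbounded-loss relative deviation to a binary-classification relative deviation over the threshold class $Q = \{z \mapsto 1_{L(h,z) > t} : h \in H, t \in \Rset\}$, and Theorem~\ref{th:relative} bounds exactly that binary-classification quantity. So the proof is essentially a one-line composition, and the work lies in checking that the hypotheses of each theorem are in force and that the quantities in the two statements line up.

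First, I would observe the key identification: for any fixed $h \in H$ and $t \in \Rset$, the function $q_{h,t}(z) = 1_{L(h,z) > t}$ is an element of $Q$ taking values in $\{0,1\}$, with $R(q_{h,t}) = \Pr_{z \sim D}[L(h,z) > t]$ and $\h R_S(q_{h,t}) = \h\Pr[L(h,z) > t]$. Hence
\begin{equation*}
\sup_{h \in H,\, t \in \Rset} \frac{\Pr[L(h,z) > t] - \h\Pr[L(h,z) > t]}{\sqrt[\alpha]{\Pr[L(h,z) > t] + \tau}}
= \sup_{q \in Q} \frac{R(q) - \h R_S(q)}{\sqrt[\alpha]{R(q) + \tau}},
\end{equation*}
and analogously for the reversed side. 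The class $Q$ is a binary hypothesis class, so Theorem~\ref{th:relative} applies directly to it with growth function $\S_Q(z_1^{2m})$.

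Next, I would apply Theorem~\ref{th:main1} under the stated assumptions $1 < \alpha \le 2$, $0 < \e \le 1$, $0 < \tau^{(\alpha-1)/\alpha} < \e^{\alpha/(\alpha-1)}$, and $\L_\alpha(h) < +\infty$ for all $h \in H$, which gives the two inequalities
\begin{equation*}
\Pr\bigg[\sup_{h \in H} \frac{\L(h) - \h\L_S(h)}{\sqrt[\alpha]{\L_\alpha(h)+\tau}} > \Gamma(\alpha,\e)\,\e\bigg]
\le \Pr\bigg[\sup_{q \in Q} \frac{R(q) - \h R_S(q)}{\sqrt[\alpha]{R(q)+\tau}} > \e\bigg],
\end{equation*}
and similarly for the reversed direction with $R$ and $\h R_S$ swapped and the normalization $\sqrt[\alpha]{\h R_S(q)+\tau}$.

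Finally, I would apply Theorem~\ref{th:relative} to the class $Q$ (with the same $\alpha$ and $\tau$) to bound each right-hand side by $4\,\E[\S_Q(z_1^{2m})] \exp\bigl(-m^{2(\alpha-1)/\alpha}\e^2 / 2^{(\alpha+2)/\alpha}\bigr)$, yielding the two stated inequalities. There is no real obstacle here beyond bookkeeping: one must simply check that the conditions of Theorem~\ref{th:relative} (which requires only that $H$ map into $\{0,1\}$, satisfied by $Q$ by construction) and of Theorem~\ref{th:main1} hold, and that the expression $\Gamma(\alpha,\e)$ is reproduced verbatim from the latter. The only mild subtlety worth flagging explicitly is the measurability assumption on the supremum, which now ranges over $h \in H$ and $t \in \Rset$ rather than just $h \in H$; as elsewhere in the paper, this is taken for granted.
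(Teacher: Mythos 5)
Your proposal is correct and matches the paper's intended argument exactly: the paper states that the corollary ``follows immediately by upper bounding the right-hand side of the learning bounds of Theorem~\ref{th:main1} using Theorem~\ref{th:relative},'' which is precisely the composition you carry out, including the identification of the threshold class $Q$ as the binary hypothesis set to which Theorem~\ref{th:relative} is applied. Your explicit checks of the hypotheses and the measurability caveat are sound bookkeeping that the paper leaves implicit.
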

The following corollary gives the explicit result for $\alpha = 2$.
\begin{corollary}
\label{cor:main1vcAlpha2}
Let $\e < 1$ and $0 < \tau < \e^4$. For any loss
function $L$ (not necessarily bounded) and hypothesis set $H$ such
that $\L_2(h) < +\infty$ for all $h \in H$, the following inequalities
hold:
\begin{align*}
& \Pr \bigg[\sup_{h \in H} \frac{\L(h) - \h \L(h)}{\sqrt{\L_2(h) + \tau}} > 
\Gamma(2, \e) \e   \bigg]
\leq 
4 \, \E[\S_Q(z_1^{2m})] \exp \bigg(   \frac{-m \e^2 }{4}  \bigg)\\
& \Pr \bigg[\sup_{h \in H} \frac{\h \L(h) - \L(h)}{\sqrt{\h \L_2(h) + \tau}} > 
\Gamma(2, \e) \e   \bigg]
\leq 4 \, \E[\S_Q(z_1^{2m})] \exp \bigg(   \frac{-m \e^2 }{4}  \bigg),
\end{align*}
with $\Gamma(2, \e) = \big( \frac{\sqrt{1+\tau}}{2} + \sqrt{1 +
  \frac{1}{4} \sqrt{\tau}} \sqrt{1 + \frac{1}{2} \log \frac{1}{\e}}
\big) $ and $Q$ the set of functions $Q = \set{z \mapsto 1_{L(h, z) >
    t} \mid h \in H, t \in \mathbb{R}}$.
\end{corollary}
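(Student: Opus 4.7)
The plan is straightforward: Corollary~\ref{cor:main1vcAlpha2} is obtained by specializing Corollary~\ref{cor:main1vc} to the particular value $\alpha = 2$, so the proof amounts entirely to substituting $\alpha = 2$ into each piece of the general statement and simplifying. No new probabilistic argument is needed; all of the work has already been done by Theorem~\ref{th:main1} (which reduces the unbounded-loss deviation to a binary classification deviation via H\"older's inequality) and Theorem~\ref{th:relative} (which bounds that binary deviation via symmetrization and Hoeffding). The hypothesis $0 < \tau < \e^4$ is exactly the $\alpha = 2$ instance of $0 < \tau^{(\alpha-1)/\alpha} < \e^{\alpha/(\alpha-1)}$, so the hypotheses of Corollary~\ref{cor:main1vc} are in force.

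For the exponential on the right-hand side, I would compute the exponent with $\alpha = 2$: the power of $m$ becomes $m^{2(\alpha-1)/\alpha} = m^{1} = m$, and the denominator inside becomes $2^{(\alpha+2)/\alpha} = 2^{2} = 4$, yielding the factor $\exp(-m\e^{2}/4)$ as claimed. The growth-function factor $\E[\S_{Q}(z_{1}^{2m})]$ and the set $Q = \{z \mapsto 1_{L(h,z) > t} : h \in H, t \in \Rset\}$ carry over unchanged.

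For the constant $\Gamma(\alpha,\e)$, I would evaluate each factor at $\alpha = 2$. The first summand $\frac{\alpha-1}{\alpha}(1+\tau)^{1/\alpha}$ becomes $\frac{1}{2}\sqrt{1+\tau}$. In the second summand, $\frac{1}{\alpha}\big(\frac{\alpha}{\alpha-1}\big)^{\alpha-1} = \frac{1}{2}\cdot 2 = 1$, the factor $\big(1 + (\frac{\alpha-1}{\alpha})^{\alpha}\tau^{1/\alpha}\big)^{1/\alpha}$ becomes $\sqrt{1 + \tfrac{1}{4}\sqrt{\tau}}$, and the bracket $\big[1 + \frac{\log(1/\e)}{(\alpha/(\alpha-1))^{\alpha-1}}\big]^{(\alpha-1)/\alpha}$ becomes $\sqrt{1 + \tfrac{1}{2}\log(1/\e)}$. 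Summing the two simplified terms gives precisely the stated $\Gamma(2,\e)$.

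There is no real obstacle here; the only thing to double-check is the bookkeeping of exponents (especially the $\frac{1}{\alpha}$ vs.\ $\frac{\alpha-1}{\alpha}$ factors on the $\tau$-dependent pieces), and that the hypothesis $\tau < \e^{4}$ is the correct specialization of the general assumption so that Corollary~\ref{cor:main1vc} applies verbatim. Once that is verified, both inequalities of Corollary~\ref{cor:main1vcAlpha2} follow immediately from the two inequalities of Corollary~\ref{cor:main1vc}, and the proof is complete.
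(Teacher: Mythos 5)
Your proposal is correct and matches the paper's intent exactly: the paper states this corollary as the explicit $\alpha=2$ instance of Corollary~\ref{cor:main1vc} without further proof, and your substitutions (the hypothesis $\tau < \e^4$, the exponent $-m\e^2/4$, and each factor of $\Gamma(2,\e)$) all check out.
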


\begin{corollary}
  Let $L$ be a loss function (not necessarily bounded) and $H$ a
  hypothesis set such that $\L_2(h) < +\infty$ for all $h \in
  H$. Then, for any $\delta > 0$, with probability at least $1 -
  \delta$, each of the following inequalities holds for all $h \in H$:
\begin{align*}
  & \L(h) \leq \h \L_S(h) + 2 \sqrt{\L_2(h)}
    \sqrt{\vphantom{\big()} \smash[b]{   \frac{2 \log \E[\S_Q(z_1^{2m})] +
    \log{\frac{1}{\delta}}  }{m}}}  \Gamma_0\Bigg(2, 2\sqrt{\vphantom{\big()} \smash[b]{   \frac{2 \log \E[\S_Q(z_1^{2m})] + \log{\frac{1}{\delta}}  }{m}}}\Bigg)\\
  & \h \L_S(h) \leq \L(h) + 2 \sqrt{\mathstrut \smash[b]{\h \L_2(h)}}
    \sqrt{\vphantom{\big()} \smash[b]{   \frac{2 \log \E[\S_Q(z_1^{2m})] +
    \log{\frac{1}{\delta}}  }{m}}}  \Gamma_0\Bigg(2, 2\sqrt{\vphantom{\big()} \smash[b]{   \frac{2 \log \E[\S_Q(z_1^{2m})] + \log{\frac{1}{\delta}}  }{m}}}\Bigg),
\end{align*}
where $Q$ is the set of functions $Q = \set{z \mapsto 1_{L(h, z) > t}
  \mid h \in H, t \in \mathbb{R}}$ and
$\Gamma_0(2, \e) = \frac{1}{2} + \sqrt{\mathstrut \smash[b]{1 + \frac{1}{2} \log \frac{1}{\e}}}$.
\end{corollary}

\begin{proof}
For any $\e > 0$, let $f(\e) = \Gamma_0(2, \e) \e$. Then, by Corollary~\ref{cor:main1vcAlpha2},
\begin{align*}
& \Pr \bigg[\sup_{h \in H} \frac{\L(h) - \h \L(h)}{\sqrt{\L_2(h) + \tau}} > 
\e \bigg]
\leq 
4 \, \E[\S_Q(z_1^{2m})] \exp \bigg(   \frac{-m [f^{-1}(\e)]^2 }{4}  \bigg).
\end{align*}
Setting the right-hand side to $\e$ and using inversion yields
immediately the first inequality. The second inequality is proven 
in the same way.
\end{proof}
Observe that, modulo the factors in $\Gamma_0$, the bounds of the
corollary admit the standard $(1/\sqrt{m})$ dependency and that the
factors in $\Gamma_0$ are only logarithmic in $m$.

\subsection{Bounded moment with $\alpha > 2$}
\label{sec:unbounded2}

This section gives two-sided generalization bounds for unbounded
losses with finite moments of order $\alpha$, with $\alpha > 2$. As
for the case $1 < \alpha < 2$, the one-sided version of our bounds
coincides with that of \cite{vapnik98,vapnik06} modulo a constant
factor, but, here again, the proofs given by Vapnik in both books seem
to be incorrect.

\begin{proposition}
\label{prop:1}
Let $\alpha > 2$. For any loss function $L$ (not
necessarily bounded) and hypothesis set $H$ such that $0 <
\L_\alpha(h) < +\infty$ for all $h \in  H$, the following two
inequalities hold:
\begin{equation*}
\int_0^{+\infty}  \sqrt{\Pr[L(h, z) > t]} dt \leq \Psi(\alpha) \sqrt[\alpha]{\L_\alpha(h)} \quad \text{and} \quad
\int_0^{+\infty}  \sqrt{\h \Pr[L(h, z) > t]} dt \leq \Psi(\alpha) \sqrt[\alpha]{\h \L_\alpha(h)},
\end{equation*}
where $\Psi(\alpha) = \big( \frac{1}{2} \big)^{\frac{2}{\alpha}}
\big( \frac{\alpha}{\alpha - 2} \big)^{\frac{\alpha - 1}{\alpha}}$.
\end{proposition}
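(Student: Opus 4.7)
The plan is to establish both inequalities by the same integration argument, since the two statements differ only in whether expectations are taken under $D$ or under $\h D$; so I will only write the argument for the first one. The starting point is the layer-cake identity
\begin{equation*}
\L_\alpha(h) = \int_0^{+\infty} \alpha t^{\alpha - 1} \Pr[L(h,z) > t] \, dt,
\end{equation*}
which is the natural companion to the quantity $\int_0^\infty \sqrt{\Pr[L(h,z) > t]} \, dt$ that we want to bound.

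The key step is to split the integral at a free parameter $t_0 > 0$, to be optimized at the end. On $[0, t_0]$, I use the trivial bound $\sqrt{\Pr[L(h,z) > t]} \le 1$, giving a contribution of at most $t_0$. On $[t_0, +\infty)$, I write $\sqrt{\Pr[L(h,z) > t]} = \sqrt{t^{\alpha-1} \Pr[L(h,z) > t]} \cdot t^{-(\alpha-1)/2}$ and apply the Cauchy--Schwarz inequality:
\begin{equation*}
\int_{t_0}^{+\infty} \sqrt{\Pr[L(h,z) > t]} \, dt \le \sqrt{\int_{t_0}^{+\infty} t^{\alpha-1} \Pr[L(h,z) > t]\, dt} \cdot \sqrt{\int_{t_0}^{+\infty} t^{-(\alpha-1)} \, dt}.
\end{equation*}
The first factor is bounded by $\sqrt{\L_\alpha(h)/\alpha}$; the second factor is finite precisely because $\alpha > 2$, and evaluates to $\sqrt{t_0^{2-\alpha}/(\alpha-2)}$. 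This is the crucial place where the hypothesis $\alpha > 2$ is used, so covering the borderline case $\alpha = 2$ would be the real obstruction to a proof of this kind; for $\alpha > 2$ it is merely a calculation.

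Combining the two pieces gives $\int_0^\infty \sqrt{\Pr[L(h,z) > t]} \, dt \le t_0 + \sqrt{\L_\alpha(h)\, t_0^{2-\alpha}/(\alpha(\alpha-2))}$. It is natural to parametrize $t_0 = k\, \L_\alpha(h)^{1/\alpha}$ so that the whole bound factors as $\bigl[k + k^{1-\alpha/2}/\sqrt{\alpha(\alpha-2)}\bigr]\, \L_\alpha(h)^{1/\alpha}$, at which point the bracket depends only on $\alpha$ and $k$. I then minimize in $k > 0$ by elementary calculus: the first-order condition $1 = (\alpha/2 - 1)\, k^{-\alpha/2}/\sqrt{\alpha(\alpha-2)}$ gives $k^{\alpha/2} = \sqrt{\alpha-2}/(2\sqrt{\alpha})$, and substituting back one checks that the minimum value of the bracket is exactly $(1/2)^{2/\alpha}\bigl(\alpha/(\alpha-2)\bigr)^{(\alpha-1)/\alpha} = \Psi(\alpha)$.

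For the second inequality, the identical argument goes through with $\Pr$ replaced by $\h \Pr$ and $\L_\alpha(h)$ replaced by $\h \L_\alpha(h)$, since the layer-cake identity and Cauchy--Schwarz apply verbatim to the empirical measure $\h D$. This completes the plan; the only minor care needed is the degenerate case $\L_\alpha(h) = 0$ (respectively $\h \L_\alpha(h) = 0$), in which $L(h, z) = 0$ almost surely under $D$ (resp.\ $\h D$), both sides of the inequality vanish, and the statement holds trivially, consistent with the hypothesis $0 < \L_\alpha(h) < +\infty$.
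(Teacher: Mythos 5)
Your proof is correct and follows essentially the same route as the paper's: split the integral at a point $t_0$ proportional to $\L_\alpha(h)^{1/\alpha}$, bound the head trivially by $t_0$, control the tail via Cauchy--Schwarz against the layer-cake representation of $\L_\alpha(h)$, and optimize the split point; your first-order condition, the optimal $k = \bigl(\tfrac{1}{2}\sqrt{(\alpha-2)/\alpha}\bigr)^{2/\alpha}$, and the resulting constant all match the paper's exactly. The only cosmetic difference is that the paper applies Cauchy--Schwarz globally with piecewise weights involving an auxiliary parameter $\gamma$ that is then optimized away by a collinearity argument, whereas you reach the same inner-product expression directly by applying Cauchy--Schwarz only on the tail.
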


\begin{proof}
  We prove the first inequality. The second can be proven in a very
  similar way. Fix $\alpha > 2$ and $h \in H$.  As
  in the proof of Theorem~\ref{th:main1}, we bound $\Pr[L(h, z) > t]$ by
  $1$ for $t$ close to $0$, say $t \leq t_0$ for some $t_0 >
  0$ that we shall later determine. We can write
\begin{equation*}
\int_0^{+\infty}  \sqrt{\Pr[L(h, z) > t]} dt \leq  
\int_{0}^{t_0}  1 dt + \int_{t_0}^{+\infty}  \sqrt{\Pr[L(h, z) > t]} dt
= \int_0^{+\infty}  f(t) g(t) dt,
\end{equation*}
with functions $f$ and $g$ defined as follows:
\begin{equation*}
f(t) =
\begin{cases}
\gamma I_\alpha^{\frac{\alpha - 1}{2 \alpha}} & \text{if } 0 \leq t \leq t_0\\
\alpha^{\frac{1}{2}} t^{\frac{\alpha - 1}{2}} \Pr[L(h, z) > t]^{\frac{1}{2}} & \text{if } t_0 < t.
\end{cases}
\quad
g(t) =
\begin{cases}
\frac{1}{\gamma I_\alpha^{\frac{\alpha - 1}{2 \alpha}}} & \text{if } 0 \leq t \leq t_0\\
\frac{1}{\alpha^{\frac{1}{2}} t^{\frac{\alpha - 1}{2}}} & \text{if } t_0 < t,
\end{cases}
\end{equation*}
where $I_\alpha = \L_\alpha(h)$ and where $\gamma$ is a positive
parameter that we shall select later.  By the Cauchy-Schwarz
inequality,
\begin{equation*}
\int_0^{+\infty}  \sqrt{\Pr[L(h, z) > t]} dt 
\leq \left( \int_0^{+\infty}  f(t)^2 dt \right)^{\frac{1}{2}} \left(\int_0^{+\infty}  g(t)^2 dt \right)^{\frac{1}{2}}.
\end{equation*}
Thus, we can write
\begin{multline*}
\int_0^{+\infty}  \sqrt{\Pr[L(h, z) > t]} dt \\
\begin{aligned}
& \leq \left( \gamma^2 I_\alpha^{\frac{\alpha - 1}{\alpha}} t_0 + \int_{t_0}^{+\infty}  \alpha t^{\alpha - 1} \Pr[L(h, z) > t] dt \right)^{\frac{1}{2}} \left( 
\frac{t_0}{\gamma^2 I_\alpha^{\frac{\alpha - 1}{\alpha}}} + \int_{t_0}^{+\infty}  \frac{1}{\alpha t^{\alpha - 1}} dt \right)^{\frac{1}{2}}\\
& \leq \left( \gamma^2 I_\alpha^{\frac{\alpha - 1}{\alpha}} t_0 + I_\alpha \right)^{\frac{1}{2}} \left( 
\frac{t_0}{\gamma^2 I_\alpha^{\frac{\alpha - 1}{\alpha}}} + \frac{1}{\alpha (\alpha - 2) t_0^{\alpha - 2}}
 \right)^{\frac{1}{2}}.
\end{aligned}
\end{multline*}
Introducing $t_1$ with $t_0 = I_\alpha^{1/\alpha} t_1$ leads to
\begin{align*}
\int_0^{+\infty}  \sqrt{\Pr[L(h, z) > t]} dt 
& \leq \left( \gamma^2 I_\alpha t_1 + I_\alpha \right)^{\frac{1}{2}} \left( 
\frac{t_1}{\gamma^2 I_\alpha^{\frac{\alpha - 2}{\alpha}}} + \frac{1}{\alpha (\alpha - 2) t_1^{\alpha - 2}  I_\alpha^{\frac{\alpha - 2}{\alpha}}}
 \right)^{\frac{1}{2}}\\
& \leq \left( \gamma^2  t_1 + 1 \right)^{\frac{1}{2}} \left( 
\frac{t_1}{\gamma^2} + \frac{1}{\alpha (\alpha - 2) t_1^{\alpha - 2}}
 \right)^{\frac{1}{2}} I_\alpha^{\frac{1}{\alpha}}.
\end{align*}
We now seek to minimize the expression $\left( \gamma^2 t_1 + 1
\right)^{\frac{1}{2}} \left( \frac{t_1}{\gamma^2} + \frac{1}{\alpha
    (\alpha - 2) t_1^{\alpha - 2}} \right)^{\frac{1}{2}}$, first as a
function of $\gamma$. This expression can be viewed as the product of
the norms of the vectors $\u = (\gamma t_1^{\frac{1}{2}}, 1)$ and $\v =
(\frac{t_1^{\frac{1}{2}}}{\gamma}, \frac{1}{\sqrt{\alpha (\alpha - 2)}
  t_1^{\frac{\alpha - 2}{2}}})$, with a constant inner product (not
depending on $\gamma$). Thus, by the properties of the Cauchy-Schwarz
inequality, it is minimized for collinear vectors and in that case
equals their inner product:
\begin{equation*}
\u \cdot \v = t_1 + \frac{1}{\sqrt{\alpha (\alpha - 2)}
  t_1^{\frac{\alpha - 2}{2}}}.
\end{equation*}
Differentiating this last expression with respect to $t_1$ and setting
the result to zero gives the minimizing value of $t_1$:
$(\frac{2}{\alpha - 2} \sqrt{\alpha (\alpha - 2)})^{-\frac{2}{\alpha}}
= \left( \frac{1}{2} \sqrt{\frac{\alpha - 2}{\alpha}}
\right)^{\frac{2}{\alpha}}$. For that value of $t_1$,
\begin{equation*}
\u \cdot \v = \left( 1 + \frac{2}{\alpha - 2} \right) t_1 = \frac{\alpha}{\alpha - 2} \left( \frac{1}{2} \sqrt{\frac{\alpha - 2}{\alpha}} \right)^{\frac{2}{\alpha}} = \left( \frac{1}{2} \right)^{\frac{2}{\alpha}}
\left( \frac{\alpha - 2}{\alpha} \right)^{\frac{1 - \alpha}{\alpha}},
\end{equation*}
which concludes the proof.
\end{proof}

\begin{theorem}
\label{th:main2}
Let $\alpha > 2$, $0 < \e \leq 1$, and $0 < \tau \leq \e^2$. Then, for
any loss function $L$ (not necessarily bounded) and hypothesis set $H$
such that $\L_\alpha(h) < +\infty$ and $\h \L_\alpha(h) < +\infty$
for all $h \in H$, the following two inequalities hold:
\begin{equation*}
\Pr \bigg[\sup_{h \in H} \frac{\L(h) - \h
  \L(h)}{\sqrt[\alpha]{\L_\alpha(h) + \tau}} > 
\Lambda(\alpha)  \e   \bigg]
\leq \Pr \bigg[\sup_{h \in H, t \in \Rset} \frac{ \Pr[L(h, z) > t] - \h \Pr[L(h, z) > t] }{\sqrt{\Pr[L(h, z) > t] + \tau}}  > \e \bigg]
\end{equation*}
\begin{equation*}
\Pr \bigg[\sup_{h \in H} \frac{\h \L(h) - \L(h)}{\sqrt[\alpha]{\h \L_\alpha(h) + \tau}} > 
\Lambda (\alpha)  \e   \bigg]
\leq \Pr \bigg[\sup_{h \in H, t \in \Rset} \frac{\h \Pr[L(h, z) > t] - \Pr[L(h, z) > t] }{\sqrt{\h \Pr[L(h, z) > t] + \tau}}  > \e \bigg],
\end{equation*}
where $\Lambda(\alpha) = \big( \frac{1}{2} \big)^{\frac{2}{\alpha}}
\big( \frac{\alpha}{\alpha - 2} \big)^{\frac{\alpha - 1}{\alpha}} +
\frac{\alpha}{\alpha - 1} \tau^{\frac{\alpha - 2}{2 \alpha}}$.
\end{theorem}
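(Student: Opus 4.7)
The plan is to emulate the structure of the proof of Theorem~\ref{th:main1}, but replace the Hölder-style machinery (tuned for $1 < \alpha \leq 2$) with the Cauchy–Schwarz-style bound already packaged in Proposition~\ref{prop:1}. Concretely, I want to show the pointwise implication: if the inner relative-deviation ratio stays below $\e$ for every $(h,t)$, then the outer ratio stays below $\Lambda(\alpha)\,\e$ for every $h$. Fix $h \in H$ and assume that for all $t \geq 0$,
\begin{equation*}
\Pr[L(h,z) > t] - \h\Pr[L(h,z) > t] \;\leq\; \e\sqrt{\Pr[L(h,z) > t] + \tau}.
\end{equation*}
The goal is to deduce $\L(h) - \h\L_S(h) \leq \e\,\Lambda(\alpha)\,(\L_\alpha(h)+\tau)^{1/\alpha}$. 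Writing $\L(h)-\h\L_S(h) = \int_0^{+\infty}\big(\Pr[L(h,z)>t] - \h\Pr[L(h,z)>t]\big)\,dt$ via the layer-cake identity reduces everything to estimating this integral.

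Split at a threshold $t_0 > 0$ to be chosen. On $[0,t_0]$, use the hypothesis together with $\sqrt{\Pr[L(h,z)>t]+\tau} \leq \sqrt{\Pr[L(h,z)>t]} + \sqrt{\tau}$; extending the first resulting integral to all of $[0,+\infty)$ and applying Proposition~\ref{prop:1} yields $\e\,\Psi(\alpha)\,\L_\alpha(h)^{1/\alpha} + \e\,t_0\sqrt{\tau}$. On $(t_0,+\infty)$, discard $\h\Pr$ and apply Markov's inequality, $\Pr[L(h,z)>t] \leq \L_\alpha(h)/t^\alpha$, producing a tail contribution $\L_\alpha(h)/((\alpha-1)t_0^{\alpha-1})$. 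Thus
\begin{equation*}
\L(h) - \h\L_S(h) \;\leq\; \e\,\Psi(\alpha)\,\L_\alpha(h)^{\tfrac{1}{\alpha}} + \e\,t_0\sqrt{\tau} + \frac{\L_\alpha(h)}{(\alpha-1)\,t_0^{\alpha-1}}.
\end{equation*}

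The decisive step is the choice $t_0 = (\L_\alpha(h)/\tau)^{1/\alpha}$. With this choice, the $\sqrt{\tau}$-term becomes $\L_\alpha(h)^{1/\alpha}\tau^{(\alpha-2)/(2\alpha)}$ and the Markov tail becomes $\L_\alpha(h)^{1/\alpha}\tau^{(\alpha-1)/\alpha}/(\alpha-1)$. Using the hypothesis $\tau \leq \e^2$, one rewrites $\tau^{(\alpha-1)/\alpha} = \tau^{(\alpha-2)/(2\alpha)} \cdot \sqrt{\tau} \leq \e\,\tau^{(\alpha-2)/(2\alpha)}$, so the Markov tail is bounded by $\e\,\L_\alpha(h)^{1/\alpha}\tau^{(\alpha-2)/(2\alpha)}/(\alpha-1)$. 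Adding the three contributions and factoring gives $\e\,[\Psi(\alpha) + \tfrac{\alpha}{\alpha-1}\tau^{(\alpha-2)/(2\alpha)}]\,\L_\alpha(h)^{1/\alpha} = \e\,\Lambda(\alpha)\,\L_\alpha(h)^{1/\alpha}$. Finally, the trivial inequality $\L_\alpha(h)^{1/\alpha} \leq (\L_\alpha(h)+\tau)^{1/\alpha}$ converts this into the desired denominator. Taking suprema over $h$ (valid since the implication holds pointwise) gives the first stated inequality.

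The second inequality is proved by the same argument with the roles of $\Pr$ and $\h\Pr$ swapped: the identity $\h\L_S(h) - \L(h) = \int_0^{+\infty}(\h\Pr[L(h,z)>t] - \Pr[L(h,z)>t])\,dt$, the hypothesis $\h\Pr - \Pr \leq \e\sqrt{\h\Pr+\tau}$, Proposition~\ref{prop:1} applied in its second form to bound $\int_0^{+\infty}\sqrt{\h\Pr[L(h,z)>t]}\,dt$ by $\Psi(\alpha)\h\L_\alpha(h)^{1/\alpha}$, and Markov on $\h\Pr[L(h,z)>t] \leq \h\L_\alpha(h)/t^\alpha$. Everything is parallel with $\L_\alpha(h)$ replaced by $\h\L_\alpha(h)$, so no new ideas are needed. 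The main obstacle is purely the bookkeeping at the threshold $t_0$: balancing the $\sqrt{\tau}$-contribution against the Markov tail and then using $\tau \leq \e^2$ in exactly the right way to collapse the two $\tau$-dependent terms into the single constant $\tfrac{\alpha}{\alpha-1}\tau^{(\alpha-2)/(2\alpha)}$ appearing in $\Lambda(\alpha)$.
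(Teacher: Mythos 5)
Your proposal is correct and follows essentially the same route as the paper's proof: the layer-cake identity, a split at a threshold $t_0$, the bound $\sqrt{\Pr+\tau}\le\sqrt{\Pr}+\sqrt{\tau}$ combined with Proposition~\ref{prop:1}, Markov's inequality on the tail, and the hypothesis $\tau\le\e^2$ to absorb the tail term into $\frac{\alpha}{\alpha-1}\tau^{\frac{\alpha-2}{2\alpha}}$. The only (immaterial) difference is your choice $t_0=(\L_\alpha(h)/\tau)^{1/\alpha}$ versus the paper's exact minimizer $t_0=(\L_\alpha(h)/(\e\sqrt{\tau}))^{1/\alpha}$; both yield the same constant $\Lambda(\alpha)$ after applying $\sqrt{\tau}\le\e$.
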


\begin{proof}
  We prove the first statement since the second one can be proven in a
  very similar way.  Assume that $\sup_{h, t} \frac{ \Pr[L(h, z) > t] - \h
    \Pr[L(h, z) > t] }{\sqrt{\Pr[L(h, z) > t] + \tau}} \leq \e$. Fix $h \in
  H$, let $J = \int_0^{+\infty} \sqrt{\Pr\left[ L(h, z) > t\right]} \, dt$
  and $\nu = \L_\alpha(h)$.  By Markov's inequality, for any $t >
  0$, $\Pr[L(h, z) > t] = \Pr[L^\alpha(h, z) > t^\alpha] \leq
  \frac{\L_\alpha(h)}{t^\alpha} = \frac{\nu}{t^\alpha}$.  Using this
  inequality, for any $t_0 > 0$, we can write
\begin{align*}
\L(h) - \h \L(h) 
& = \int_{0}^{+\infty} (\Pr[L(h, z) > t]  - \h \Pr[L(h, z) > t]) \, dt \\
& = \int_{0}^{t_0} (\Pr[L(h, z) > t]  - \h \Pr[L(h, z) > t]) \, dt + \int_{t_0}^{+\infty} (\Pr[L(h, z) > t]  - \h \Pr[L(h, z) >
  t]) \, dt\\
& \leq \e \int_{0}^{t_0} \sqrt{\Pr[L(h, z) >
    t] + \tau} \, dt + \int_{t_0}^{+\infty} \Pr[L(h, z) > t] \, dt\\
& \leq \e \int_{0}^{t_0} (\sqrt{\Pr[L(h, z) >
    t]} + \sqrt{\tau}) \, dt  + \int_{t_0}^{+\infty} \frac{\nu}{t^\alpha} \, dt\\
& \leq \e J  + \e \sqrt{\tau} t_0 + \frac{\nu}{(\alpha - 1)
  t_0^{\alpha - 1}}.
\end{align*}
Choosing $t_0$ to minimize the right-hand side yields $t_0 = \big(\frac{\nu}{\e
  \sqrt{\tau}} \big)^{\frac{1}{\alpha}}$ and gives
\begin{equation*}
\L(h) - \h \L(h) \leq \e J  + \frac{\alpha}{\alpha - 1}
\nu^{\frac{1}{\alpha}} (\e \sqrt{\tau})^{\frac{\alpha - 1}{\alpha}}.
\end{equation*}
Since $\tau \leq \e^2$, $(\e \sqrt{\tau})^{\frac{\alpha - 1}{\alpha}}
= [\e \tau^{\frac{1}{2 (\alpha - 1)}} \tau^{\frac{\alpha - 2}{2
    (\alpha - 1)}}]^{\frac{\alpha - 1}{\alpha}}
\leq [\e \e^{\frac{1}{(\alpha - 1)}} \tau^{\frac{\alpha - 2}{2
    (\alpha - 1)}}]^{\frac{\alpha - 1}{\alpha}} = \e
\tau^{\frac{\alpha - 2}{2 \alpha}}$.
Thus, by Proposition~\ref{prop:1}, the following holds:
\begin{equation*}
\frac{\L(h) - \h \L(h)}{\sqrt[\alpha]{\L_\alpha(h) + \tau}} 
\leq \e \Psi(\alpha) \frac{\nu^{\frac{1}{\alpha}}}{(\nu + \tau)^{\frac{1}{\alpha}}}  + \frac{\alpha}{\alpha - 1}\e
\tau^{\frac{\alpha - 2}{2 \alpha}} \frac{\nu^{\frac{1}{\alpha}}}{(\nu
  + \tau)^{\frac{1}{\alpha}}}
\leq \e \Psi(\alpha) + \frac{\alpha}{\alpha - 1}\e
\tau^{\frac{\alpha - 2}{2 \alpha}},
\end{equation*}
which concludes the proof.
\end{proof}
Combining Theorem~\ref{th:main2} with Theorem~\ref{th:relative} leads
immediately to the following two results.

\begin{corollary}
\label{cor:main2}
Let $\alpha > 2$, $0 < \e \leq 1$, and $0 < \tau \leq \e^2$. Then, for
any loss function $L$ (not necessarily bounded) and hypothesis set $H$
such that $\L_\alpha(h) < +\infty$ and $\h \L_\alpha(h) < +\infty$
for all $h \in H$, the following two inequalities hold:
\begin{align*}
& \Pr \bigg[\sup_{h \in H} \frac{\L(h) - \h
  \L(h)}{\sqrt[\alpha]{\L_\alpha(h) + \tau}} > 
\Lambda(\alpha)  \e   \bigg]
\leq 4 \, \E[\S_Q(z_1^{2m})] \exp \bigg(   \frac{-m \e^2 }{4}  \bigg)\\
& \Pr \bigg[\sup_{h \in H} \frac{\h \L(h) - \L(h)}{\sqrt[\alpha]{\h \L_\alpha(h) + \tau}} > 
\Lambda (\alpha)  \e   \bigg]
\leq 4 \, \E[\S_Q(z_1^{2m})] \exp \bigg(   \frac{-m \e^2 }{4}  \bigg),
\end{align*}
where $\Lambda(\alpha) = \big( \frac{1}{2} \big)^{\frac{2}{\alpha}}
\big( \frac{\alpha}{\alpha - 2} \big)^{\frac{\alpha - 1}{\alpha}} +
\frac{\alpha}{\alpha - 1} \tau^{\frac{\alpha - 2}{2 \alpha}}$ and
where $Q$ is the set of functions $Q = \set{z \mapsto 1_{L(h, z) > t}
  \mid h \in H, t \in \mathbb{R}}$.
\end{corollary}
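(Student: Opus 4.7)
The proof is essentially a direct composition of the two theorems already in hand, so the plan is short. The idea is to use Theorem~\ref{th:main2} to reduce each unbounded-loss relative deviation on the left-hand side to a relative deviation for a binary class, and then invoke Theorem~\ref{th:relative} on that binary class to obtain the exponential tail bound.

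First I would fix $\alpha > 2$, $0 < \e \le 1$, and $0 < \tau \le \e^2$, and introduce the auxiliary hypothesis class $Q = \{z \mapsto 1_{L(h,z) > t} : h \in H,\, t \in \Rset\}$, whose elements map $\cZ$ to $\{0,1\}$. For any $q \in Q$ of the form $q(z) = 1_{L(h,z) > t}$, we have $R(q) = \Pr[L(h,z) > t]$ and $\h R_S(q) = \h\Pr[L(h,z) > t]$, so the supremum over $h \in H$ and $t \in \Rset$ appearing on the right-hand side of Theorem~\ref{th:main2} is exactly the supremum over $q \in Q$ appearing in Theorem~\ref{th:relative}, with the choice $\alpha = 2$ in the denominator $\sqrt{R(q) + \tau}$.

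For the first inequality, Theorem~\ref{th:main2} yields
\begin{equation*}
\Pr \bigg[\sup_{h \in H} \frac{\L(h) - \h \L(h)}{\sqrt[\alpha]{\L_\alpha(h) + \tau}} > \Lambda(\alpha)\, \e \bigg]
\le \Pr \bigg[\sup_{q \in Q} \frac{R(q) - \h R_S(q)}{\sqrt{R(q) + \tau}} > \e \bigg],
\end{equation*}
and Theorem~\ref{th:relative} applied to $Q$ with parameter $\alpha = 2$ bounds the right-hand side by $4\,\E[\S_Q(z_1^{2m})]\, \exp(-m^{2(2-1)/2} \e^2 / 2^{(2+2)/2}) = 4\,\E[\S_Q(z_1^{2m})]\, \exp(-m \e^2 / 4)$, which is exactly the desired bound. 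For the second inequality, one applies the second statement of Theorem~\ref{th:main2} (the version with $\h\L_\alpha(h)$ in the denominator reducing to $\h R_S(q)$ in the binary denominator) together with the second inequality of Theorem~\ref{th:relative} in the same way.

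There is no real obstacle here; the only thing to check is that the exponent produced by Theorem~\ref{th:relative} collapses correctly when $\alpha = 2$ and that the indicator class $Q$ indeed takes values in $\{0,1\}$ so that Theorem~\ref{th:relative} applies verbatim. The constant $\Lambda(\alpha)$ and the admissibility condition $0 < \tau \le \e^2$ are inherited unchanged from Theorem~\ref{th:main2}.
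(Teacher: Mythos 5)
Your proposal is correct and matches the paper's intended argument exactly: the paper states that Corollary~\ref{cor:main2} follows by combining Theorem~\ref{th:main2} with Theorem~\ref{th:relative} applied to the indicator class $Q$ with $\alpha = 2$, which is precisely what you do, including the correct evaluation of the exponent $m^{2(\alpha-1)/\alpha}\e^2/2^{(\alpha+2)/\alpha} = m\e^2/4$ at $\alpha = 2$.
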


In the following result, $\Pdim(G)$ denotes the pseudo-dimension of a
family of real-valued functions $G$
\citep{Pollard84,Pollard89,vapnik98}, which coincides
with the VC-dimension of the corresponding thresholded functions:
\begin{equation}
 \Pdim(G) = \text{VCdim}\Big(\big\{(x, t) \mapsto 1_{(g(x) - t)
> 0} \colon g \in G \big\}\Big) \,.
\end{equation}

\begin{corollary}
\label{cor:main21}
Let $\alpha > 2$, $0 < \e \leq 1$. Let $L$ be a loss function (not
necessarily bounded) and $H$ a hypothesis set such that $\L_\alpha(h)
< +\infty$ for all $h \in H$, and $d = \Pdim(\set{z \mapsto L(h, z)
  \mid h \in H}) < +\infty$. Then, for any $\delta > 0$, with
probability at least $1 - \delta$, each of the following inequalities
holds for all $h \in H$:
\begin{align*}
&  \L(h) \leq \h \L(h) + 2 \Lambda (\alpha) \sqrt[\alpha]{\L_\alpha(h)} \sqrt{\frac{d \log \frac{2em}{d}+ \log \frac{4}{\delta}}{m}}\\
& \h \L(h) \leq \L(h) + 2 \Lambda (\alpha) \sqrt[\alpha]{\h \L_\alpha(h)} \sqrt{\frac{d \log \frac{2em}{d}+ \log \frac{4}{\delta}}{m}}
\end{align*}
where $\Lambda(\alpha) = \big( \frac{1}{2} \big)^{\frac{2}{\alpha}}
\big( \frac{\alpha}{\alpha - 2} \big)^{\frac{\alpha - 1}{\alpha}}$.
\end{corollary}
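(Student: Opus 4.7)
The plan is to derive this corollary as a fairly direct consequence of Corollary~\ref{cor:main2} combined with Sauer's lemma and the identification of $\Pdim$ with the VC-dimension of the threshold family. I would fix $\alpha > 2$ and start from the first inequality of Corollary~\ref{cor:main2}, which states that for $0 < \tau \leq \e^2$,
\begin{equation*}
\Pr_{S \sim D^m} \bigg[\sup_{h \in H} \frac{\L(h) - \h \L_S(h)}{\sqrt[\alpha]{\L_\alpha(h) + \tau}} > \Lambda(\alpha)\, \e\bigg] \leq 4\, \E[\S_Q(z_1^{2m})]\, \exp\!\bigg(\frac{-m\e^2}{4}\bigg),
\end{equation*}
where $\Lambda(\alpha) = \left(\tfrac{1}{2}\right)^{2/\alpha} \left(\tfrac{\alpha}{\alpha-2}\right)^{(\alpha-1)/\alpha} + \tfrac{\alpha}{\alpha-1}\tau^{(\alpha-2)/(2\alpha)}$.

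Next, I would convert the VC-style bound into the pseudo-dimension bound stated in the corollary. By definition, the class $Q = \{z \mapsto 1_{L(h,z) > t} : h \in H, t \in \Rset\}$ is exactly the threshold family associated with $G = \{z \mapsto L(h,z) : h \in H\}$, so $\text{VCdim}(Q) = \Pdim(G) = d$. Applying Sauer's lemma gives $\S_Q(z_1^{2m}) \leq \big(\tfrac{2em}{d}\big)^d$ for $2m \geq d$, and hence $\log \E[\S_Q(z_1^{2m})] \leq d \log \tfrac{2em}{d}$. Setting the right-hand side of the probability bound equal to $\delta$ and inverting yields
\begin{equation*}
\e = 2\sqrt{\tfrac{d \log(2em/d) + \log(4/\delta)}{m}}.
\end{equation*}

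Finally, I would take the limit $\tau \to 0^{+}$. Since $\alpha > 2$ we have $(\alpha-2)/(2\alpha) > 0$, so $\tau^{(\alpha-2)/(2\alpha)} \to 0$, and therefore the $\Lambda(\alpha)$ appearing in the bound converges precisely to the expression $\big(\tfrac{1}{2}\big)^{2/\alpha}\big(\tfrac{\alpha}{\alpha-2}\big)^{(\alpha-1)/\alpha}$ stated in the corollary. Simultaneously $\sqrt[\alpha]{\L_\alpha(h) + \tau} \to \sqrt[\alpha]{\L_\alpha(h)}$ by continuity, and the constraint $\tau \leq \e^2$ is preserved throughout the limit. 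Rearranging the resulting inequality $\L(h) - \h\L_S(h) \leq \Lambda(\alpha)\, \e \sqrt[\alpha]{\L_\alpha(h)}$ and substituting the value of $\e$ yields the factor $2\Lambda(\alpha)$ in front of $\sqrt[\alpha]{\L_\alpha(h)}\sqrt{(d\log(2em/d) + \log(4/\delta))/m}$, which is the first asserted bound. The second inequality is obtained in exactly the same way starting from the second statement of Corollary~\ref{cor:main2}, with $\L_\alpha(h)$ replaced by $\h \L_\alpha(h)$.

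There is no real obstacle; the only point that requires a little care is justifying the $\tau \to 0$ passage in the denominator and inside $\Lambda(\alpha)$, which works cleanly only because $\alpha > 2$ makes the exponent of $\tau$ strictly positive. Had we been in the regime $\alpha \leq 2$, the analogous simplification would fail and one would be forced to keep a $\tau$-dependent constant, as was the case in Section~\ref{sec:unbounded1}.
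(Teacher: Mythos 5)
Your proposal is correct and is precisely the derivation the paper intends: the paper omits the proof, stating only that Corollary~\ref{cor:main21} follows ``immediately'' from combining Theorem~\ref{th:main2} with Theorem~\ref{th:relative} (i.e.\ from Corollary~\ref{cor:main2}), and your route via Sauer's lemma, the identification $\text{VCdim}(Q)=\Pdim(G)=d$, inversion of the exponential bound, and the limit $\tau\to 0^{+}$ is exactly that argument. Your closing remark correctly identifies the only delicate point, namely that $\alpha>2$ makes the exponent $\frac{\alpha-2}{2\alpha}$ strictly positive so the $\tau$-dependent term of $\Lambda(\alpha)$ vanishes in the limit.
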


\section{Conclusion}

We presented a series of results for relative deviation bounds 
used to prove generalization bounds for unbounded loss
functions. These learning bounds can be used in a variety of
applications to deal with the more general unbounded case. The
relative deviation bounds are of independent interest and can
be further used for a sharper analysis of guarantees in binary
classification and other tasks.

\ignore{
\section*{Acknowledgments}
}

\bibliography{ubound}
\appendix

\section{Lemmas in support of Section~\ref{sec:relative}}

\begin{lemma}
\label{lemma:f}
  Let $1 < \alpha \leq 2$ and for any $\eta > 0$, let $f\colon (0,
  +\infty) \times (0, +\infty) \to \Rset$ be the function defined by
  $f\colon (x, y) \mapsto \frac{x - y}{\sqrt[\alpha]{x + y + \eta}}$.
Then, $f$ is a strictly increasing function of $x$ and a strictly
decreasing function of $y$.
\end{lemma}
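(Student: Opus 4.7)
The plan is to prove monotonicity by direct differentiation; no clever tricks are needed, as the statement is essentially a calculus computation on an explicit two-variable function.

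First I would compute the partial derivative with respect to $x$. Writing $D = x + y + \eta$, the quotient rule gives
\begin{equation*}
\frac{\partial f}{\partial x} = \frac{D^{1/\alpha} - (x-y)\cdot \tfrac{1}{\alpha} D^{1/\alpha - 1}}{D^{2/\alpha}} = \frac{\alpha D - (x-y)}{\alpha\, D^{1 + 1/\alpha}} = \frac{(\alpha-1)x + (\alpha+1)y + \alpha\eta}{\alpha\,(x+y+\eta)^{1+1/\alpha}}.
\end{equation*}
For $\alpha > 1$ and $x,y,\eta > 0$, the numerator is strictly positive, so $\partial f/\partial x > 0$, and $f$ is strictly increasing in $x$.

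Analogously, differentiating in $y$ yields
\begin{equation*}
\frac{\partial f}{\partial y} = \frac{-D^{1/\alpha} - (x-y)\cdot \tfrac{1}{\alpha} D^{1/\alpha-1}}{D^{2/\alpha}} = -\,\frac{(\alpha+1)x + (\alpha-1)y + \alpha\eta}{\alpha\,(x+y+\eta)^{1+1/\alpha}},
\end{equation*}
whose numerator is again strictly positive under the same assumptions, so $\partial f/\partial y < 0$ and $f$ is strictly decreasing in $y$. Since $f$ is continuously differentiable on $(0,+\infty)^2$, strict sign of the partial derivatives gives strict monotonicity in each argument separately.

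There is no real obstacle here: the constraint $1 < \alpha \leq 2$ is only used through $\alpha - 1 > 0$ (the upper bound $\alpha \leq 2$ plays no role in the monotonicity argument), and the positivity of $\eta$ together with $x,y > 0$ ensures $D > 0$ so that the fractional power $D^{1/\alpha}$ and its derivative are well-defined and positive. I would simply present the two derivative computations, observe that both numerators are sums of strictly positive terms, and conclude.
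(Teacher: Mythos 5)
Your proof is correct and is essentially identical to the paper's: both compute the two partial derivatives by the quotient rule, simplify to a quotient with numerator $(\alpha-1)x+(\alpha+1)y+\alpha\eta$ (resp.\ its analogue for $y$), and conclude from strict positivity. Your added observation that only $\alpha>1$ is needed, not $\alpha\leq 2$, is accurate.
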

\begin{proof}
  $f$ is differentiable over its domain of definition and for all $(x,
  y) \in (0, +\infty) \times (0, +\infty)$,
\begin{align*}
& \frac{\partial f}{\partial x}(x, y) 
= \frac{(x + y +
    \eta)^{\frac{1}{\alpha}} - \frac{x - y}{\alpha} (x + y +
    \eta)^{\frac{1}{\alpha} - 1}}{(x + y +
    \eta)^{\frac{2}{\alpha}}} 
= \frac{\frac{\alpha - 1}{\alpha} x + \frac{\alpha + 1}{\alpha} y + \eta}{(x + y +
    \eta)^{1 + \frac{1}{\alpha}}}  > 0\\
& \frac{\partial f}{\partial y}(x, y) 
= \frac{- (x + y +
    \eta)^{\frac{1}{\alpha}} - \frac{x - y}{\alpha} (x + y +
    \eta)^{\frac{1}{\alpha} - 1}}{(x + y +
    \eta)^{\frac{2}{\alpha}}} 
= -\frac{\frac{\alpha + 1}{\alpha} x + \frac{\alpha - 1}{\alpha}  y + \eta}{(x + y +
    \eta)^{1 + \frac{1}{\alpha}}}  < 0.
\end{align*}
\end{proof}

\end{document}